\documentclass[11pt,authoryear]{article}
\usepackage[margin=1in]{geometry}
\usepackage{amsmath,amssymb,amsthm}
\usepackage{algorithm}
\usepackage{algpseudocode}
\usepackage{graphicx}
\usepackage[hidelinks,colorlinks=true,linkcolor=blue,citecolor=blue]{hyperref}

\newcommand{\St}{\mathrm{St}}
\newcommand{\RetrNS}{\operatorname{NSOrth}}
\newcommand{\Polar}{\operatorname{Polar}}

\newtheorem{definition}{Definition}
\newtheorem{assumption}{Assumption}
\newtheorem{lemma}{Lemma}
\newtheorem{theorem}{Theorem}
\newtheorem{proposition}{Proposition}
\newtheorem{remark}{Remark}

 \usepackage{natbib}
\title{Newton--Schulz Retraction-Based Inference Enables Hidden Quantum Markov Models to Outperform Classical HMMs}
\author{Ning Ning\\\\
Department of Statistics, Texas A\&M University\\
College Station, Texas, USA\\
\texttt{patning@tamu.edu}}
\date{}

\begin{document}

\maketitle

\begin{abstract}
Hidden Markov models (HMMs) are widely used probabilistic models for discrete sequential data but can be limited when hidden dynamics are complex. Hidden quantum Markov models (HQMMs) generalize HMMs by replacing probability vectors with density matrices and stochastic transitions with quantum operations, enabling richer latent representations. However, existing HQMM learning methods have not consistently outperformed Expectation--Maximization (EM)-trained HMMs on data not generated by quantum processes, limiting their practical applicability. We introduce NS-RIS, Newton--Schulz Retraction-based Inference on the Stiefel manifold, a scalable algorithm for learning trace-preserving HQMMs. NS-RIS uses Newton--Schulz orthogonalization to compute a polar-factor search direction while preserving Stiefel-manifold feasibility, avoiding costly matrix decompositions. We establish a finite-time stationarity guarantee under standard assumptions on smoothness, stochastic gradients, and finite Newton--Schulz accuracy. Importantly, NS-RIS is the first HQMM inference algorithm with mathematical performance guarantee. Empirically, NS-RIS provides the first benchmark evidence that an HQMM can significantly outperform an EM-trained HMM on data not generated by a quantum model. On synthetic HMM-generated benchmarks, NS-RIS outperforms both EM and the state-of-the-art HQMM method COSM, improving the evaluation metric by an average of $38.5\%$ and by up to $50.6\%$. On a synthetic HQMM benchmark, it improves the test metric over COSM by $18.9\%$ while reducing runtime by $12.0\%$. On the real-world Splice classification benchmark, NS-RIS also surpasses both EM and COSM in higher-dimensional latent regimes, reducing mean classification error by $17.9\%$ for latent dimension $6$ and $14.9\%$ for latent dimension $8$ relative to COSM. These results move HQMMs beyond a theoretical generalization of HMMs and establish them as practical and expressive models for scientific sequence data.

\end{abstract}

\tableofcontents

\section{Introduction}
Hidden Markov models (HMMs) are a standard language for sequential data.
They are used wherever an observed time series is driven by an unobserved
state process, including computational biology, speech and language
processing, signal analysis, finance, and many other scientific areas. In
biology alone, HMMs and profile HMMs have become foundational tools for
modeling protein families, sequence motifs, and splice-site structure
\citep{Krogh1994,Eddy1998,Burge1998}. Their success comes from a simple and
interpretable architecture: a hidden Markov chain evolves over time and
emits observations conditionally on its current state. This simplicity also
creates a limitation. When the hidden mechanism contains higher-order,
contextual, or nonclassical dependencies, a probability vector over a finite
set of latent states may require a large state space or may fail to represent
the relevant dependence efficiently.

Hidden quantum Markov models (HQMMs) provide a principled extension of this
classical framework. More broadly, researchers in physics and machine
learning have developed quantum graphical models  by incorporating the
quantum mechanical view of probability into graphical-model inference
\citep{warmuth2006bayesian,leifer2008quantum,yeang2010probabilistic,leifer2013towards}.
Instead of representing the latent belief state by a probability vector, an
HQMM represents it by a density matrix; instead of using nonnegative
transition-emission matrices, it uses symbol-conditioned quantum operations
represented by Kraus operators
\citep{monras2010hidden,clark2015hidden,deb2026quantum}. This formulation
preserves the sequential likelihood structure of HMMs while allowing the
latent state to encode coherence through off-diagonal matrix entries. Prior
work has shown that HQMMs can be more expressive than classical HMMs and can
represent some sequential processes more compactly
\citep{NIPS2018_8235,ning2025robust}. Recent applications and physical
formulations have further connected HQMMs to sequential analysis,
measurement-induced quantum inference, and many-body topological structure
\citep{souissi2026hidden,kim2026measurement}. General overviews of
quantum machine learning also identify learning expressive quantum
sequential models from data as an important open problem
\citep{schuld2015introduction,biamonte2016quantum}. Related quantum
graphical model perspectives connect HQMMs to inference in Hilbert space and
to operator-based models of stochastic processes
\citep{jaeger2000observable,zhu2025channel}.

The central obstacle is learning. HQMM parameters must satisfy the
trace-preserving Kraus constraint
$\sum_{y,q}K_{y,q}^{\dagger}K_{y,q}=I$, which makes the feasible parameter
space a complex Stiefel manifold after stacking the Kraus operators. Existing
methods either pay a high computational price to maintain feasibility or use
updates that do not consistently translate HQMM expressiveness into better
performance on ordinary, non-quantum-generated data. As a result, HQMMs have
remained more compelling as a theoretical generalization of HMMs than as a
practical replacement for HMMs in the broad scientific settings where HMMs
are routinely used.

This paper introduces NS-RIS, short for Newton--Schulz Retraction-Based
Inference on the Stiefel manifold, for scalable HQMM learning. The method
uses Newton--Schulz orthogonalization twice: first to approximate the polar
factor of the Riemannian momentum direction, and second to retract the
updated Kraus matrix back to the Stiefel manifold. In this way, NS-RIS
preserves the physical trace-preserving constraint while avoiding expensive
matrix decompositions inside the repeated learning loop.  We
compare NS-RIS with the main existing HQMM learning baselines: Givens
Search (GS) \citep{srinivasan2018}, constrained optimization on the Stiefel
manifold (COSM) \citep{adhikary2020expressiveness}, and the classical
Expectation--Maximization (EM) procedure for HMMs which serves as the
standard non-quantum benchmark.

The key findings are as follows.
\begin{itemize}
    \item NS-RIS gives a decomposition-free learning method on the Stiefel
    manifold for trace-preserving HQMMs. Algorithm~\ref{alg:learn-qgm4}
    gives the full incremental learning procedure and
    Algorithm~\ref{alg:ns-orth} gives the Newton--Schulz orthogonalization
    subroutine; together they make the Kraus constraint part of the geometry
    of the algorithm rather than an after-the-fact correction.
    \item Proposition~\ref{prop:operator-steepest-retraction} mathematically
    explains the two purposes of the Newton--Schulz design. The first
    Newton--Schulz step approximates the polar factor that solves the
    operator-norm steepest-descent subproblem under constraint in the (matrix) operator norm, with optimal value obtained with the (matrix) nuclear norm; the
    second Newton--Schulz step approximates the nearest Stiefel retraction
    under the (matrix) Frobenius norm.
    \item The main theoretical contribution is
    Theorem~\ref{thm:nsris-convergence}, which establishes a finite-time
    stationarity guarantee under standard smoothness, stochastic-gradient,
    and finite Newton--Schulz accuracy assumptions. This appears to be the
    first mathematical convergence guarantee for an HQMM inference algorithm.
    The bound separates the
    descent term, smoothness term, finite Newton--Schulz feasibility
    residuals, and stochastic momentum-tracking term, showing how each source
    affects the final stationarity level.
    \item On synthetic HMM-generated data, generated from the standard
    $6$-hidden-state, $6$-output HMM benchmark, NS-RIS exceeds the
    EM-trained HMM baseline and improves over COSM by $38.5\%$ on average
    across tested latent dimensions and Kraus ranks. In the best-performing
    configuration, it reaches a $50.6\%$ relative improvement over COSM while
    maintaining competitive running time; see
    Subsection~\ref{sec:synthetic-hmm-results} and
    Figures~\ref{fig:synth-hmm-w1}--\ref{fig:synth-hmm-runtime}.
    \item On the synthetic HQMM benchmark generated by a
    $2$-hidden-state, $6$-output HQMM with Kraus rank $w=1$, NS-RIS achieves
    the best training, validation, and test metrics among GS, COSM, and EM.
    It improves the test metric over COSM by $18.9\%$ while reducing runtime
    by $12.0\%$, showing that the gain is not obtained by a larger
    computational cost; see Subsection~\ref{sec:synthetic-hqmm-results} and
    Figure~\ref{fig:synth-hqmm-runtime}.
    \item On the real Splice classification benchmark, NS-RIS outperforms EM
    and COSM in higher-dimensional latent regimes, reducing mean error by
    $17.9\%$ for latent dimension $6$ and by $14.9\%$ for latent dimension
    $8$ relative to COSM after averaging over Kraus ranks. Class-wise results
    further show improvements in EI, IE, and negative-class errors for
    $(n,w)=(6,4)$ and $(8,4)$, with the largest reductions in the
    negative-class error. The scientific validation in
    Subsection~\ref{sec:scientific-validation} explains why latent
    dimensions larger than the four-letter nucleotide alphabet are
    biologically meaningful: they can encode splice motifs, exon--intron
    regimes, positional context, and long-range dependencies; see
    Figures~\ref{fig:splice_combined_mean_error_bars}--\ref{fig:splice_runtime_nsris_cosm_bars}.
\end{itemize}
Together, these findings provide the first benchmark
evidence in this setting that an HQMM can substantially outperform an
EM-trained HMM on data not generated by a quantum model; they also establish the first HQMM inference algorithm with a mathematical convergence guarantee. This moves HQMMs
beyond the statement that they generalize HMMs mathematically: it shows that,
with an effective Stiefel-manifold learning procedure, HQMMs can reliably deliver
practical gains on ordinary scientific sequence data.

The remainder of the paper is organized as follows. Section~\ref{sec:background}
reviews HMMs and introduces HQMMs in the operator form used throughout the
paper. Section~\ref{sec:NSRIS} formulates HQMM learning on the Stiefel
manifold, presents NS-RIS, and states the main mathematical guarantees.
Section~\ref{sec:numerical-analysis} evaluates NS-RIS on synthetic HMM and
HQMM benchmarks. Section~\ref{sec:empirical-analysis} studies real splice
sequence classification and discusses its scientific implications.
Section~\ref{sec:conclusion} concludes, and Appendix~\ref{sec:proofs}
contains the proofs.

\section{From HMMs to HQMMs}
\label{sec:background}
This section recalls the classical HMM formulation in
Subsection~\ref{sec:HMM}, and then introduces the HQMM extension in
Subsection~\ref{sec:HQMM} whose quantum states and Kraus
operators generalize the hidden-state dynamics.

\subsection{Hidden Markov Models}
\label{sec:HMM}
An HMM models a discrete-time sequence by separating what is observed from
what is only indirectly inferred. At time $t$, the model has a hidden state
$X_t\in\{1,\ldots,n\}$ and emits an observation
$Y_t\in\{1,\ldots,m\}$. The hidden process is Markovian, so the distribution
of $X_t$ depends on the past only through $X_{t-1}$, while the observation
$Y_t$ is drawn conditionally on the current hidden state $X_t$. This architecture
is useful when the data exhibit temporal dependence but the mechanism
driving that dependence is not directly observed.

Let $\pi\in\mathbb{R}^n$ be the initial distribution, let
$A\in\mathbb{R}_{\geq0}^{n\times n}$ be the transition matrix with
$A_{ij}=P(X_t=i\mid X_{t-1}=j)$, and let
$C\in\mathbb{R}_{\geq0}^{m\times n}$ be the emission matrix with
$C_{yi}=P(Y_t=y\mid X_t=i)$. We use the column-stochastic convention
$\mathbf{1}^T\pi=1$, 
    $\mathbf{1}^TA=\mathbf{1}^T$, and 
    $\mathbf{1}^TC=\mathbf{1}^T$. 
If $x_{t-1}$ denotes the filtering distribution over hidden states after
the first $t-1$ observations, prediction gives $Ax_{t-1}$ before the next
symbol is observed. Once $Y_t=y_t$ is observed, the filtering distribution
is updated by
\[
    x_t
    =
    \frac{\operatorname{diag}(C(y_t,:))A x_{t-1}}
    {\mathbf{1}^T\operatorname{diag}(C(y_t,:))A x_{t-1}}.
\]
The denominator is the one-step predictive probability of $y_t$ under the
current model.

It is often convenient to combine transition and emission into a
symbol-indexed operator
\begin{equation}
    \label{eqn:T}
    T_y=\operatorname{diag}(C(y,:))A.
\end{equation}
For an observation sequence $\bar{y}=y_1,\ldots,y_T$, the likelihood is then
\[
    P(\bar{y})
    =
    \mathbf{1}^T T_{y_T}T_{y_{T-1}}\cdots T_{y_1}\pi.
\]
Thus an HMM can be viewed as a family of nonnegative linear maps
$\{T_y\}_{y=1}^m$ acting on probability vectors, with normalization after
each observation when filtering is required. This operator view is the form
that most directly connects classical HMMs to their quantum generalization:
HQMMs keep the same sequence-likelihood logic, but replace probability
vectors by density matrices and replace the nonnegative maps $T_y$ by
quantum operations.

\begin{figure}[t!]
	\centering
	\includegraphics[width=1\linewidth]{\detokenize{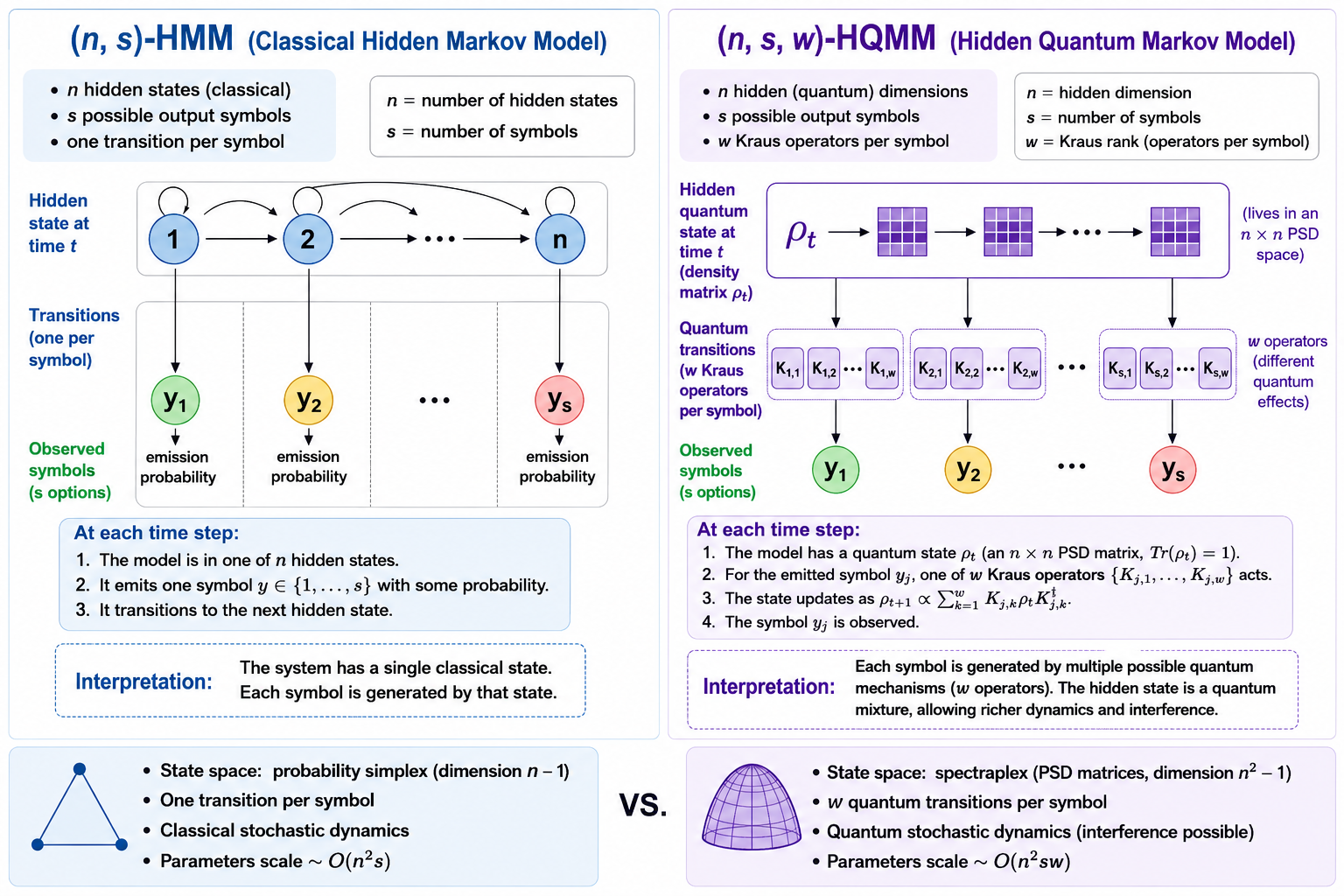}}
	\label{fig:HMM_HQMM}
\end{figure}

\subsection{Hidden Quantum Markov Models}
\label{sec:HQMM}
The passage from HMMs to HQMMs begins with the state representation. In a
classical model, the belief state is a probability vector. In a quantum
model, the state is represented by a density matrix $\rho$, a positive
semidefinite matrix with $\operatorname{tr}(\rho)=1$. The diagonal entries
of $\rho$ can be interpreted as ordinary probabilities in a chosen basis,
while the off-diagonal entries encode quantum coherence. For example, the
two-dimensional state
\[
    |\psi\rangle
    =
    \frac{1}{\sqrt{2}}|0\rangle
    -
    \frac{i}{\sqrt{2}}|1\rangle
\]
has density matrix
\[
    \rho
    =
    |\psi\rangle\langle\psi|
    =
    \begin{bmatrix}
        1/2 & i/2\\
        -i/2 & 1/2
    \end{bmatrix}.
\]
The diagonal entries assign equal probability to the two basis states, and
the off-diagonal entries retain phase information that is absent from a
classical probability vector.

The second replacement is at the level of dynamics. A quantum operation
$\mathcal{K}$ maps density matrices to density matrices and can be written
in Kraus form as
\[
    \mathcal{K}(\rho)
    =
    \sum_i K_i\rho K_i^\dagger,
\]
where $K_i^\dagger$ is the Hermitian conjugate of $K_i$. The operation is
trace-preserving when
\[
    \sum_i K_i^\dagger K_i=\mathbb{I}_n,
\]
and trace-nonincreasing when the left-hand side is bounded above by
$\mathbb{I}_n$. In an HQMM, each observable symbol is associated with a
trace-nonincreasing operation, and the sum over all symbols is
trace-preserving. This mirrors the HMM requirement that the probabilities of
all possible emissions sum to one.

\begin{definition}[\citet{monras2010hidden}]
    \label{def:HQMM}
    A hidden quantum Markov model is a quantum system with state $\rho$ and
    a set of quantum operations $\{\mathcal{K}_s\}$ indexed by output
    symbols, such that $\sum_s\mathcal{K}_s$ is trace-preserving. At each
    time step, symbol $s$ is generated with probability
    $P(s)=\operatorname{tr}[\mathcal{K}_s(\rho)]$, and the state is updated
    to $\rho_s=\mathcal{K}_s(\rho)/P(s)$.
\end{definition}

If each symbol-conditioned operation is represented by $w$ Kraus operators,
\[
    \mathcal{K}_s(\rho)
    =
    \sum_{q=1}^{w}K_{s,q}\rho K_{s,q}^\dagger,
\]
then the trace-preserving condition for the full model is
\begin{equation}
    \label{eqn:TP}
    \sum_{s=1}^{m}\sum_{q=1}^{w}K_{s,q}^\dagger K_{s,q}
    =
    \mathbb{I}_n.
\end{equation}
Given an observed sequence $y_1,\ldots,y_T$, the HQMM filtering recursion is
\begin{equation}
    \rho_t
    =
    \frac{
        \sum_{q=1}^{w}K_{y_t,q}\rho_{t-1}K_{y_t,q}^\dagger
    }{
        \operatorname{tr}\left(
        \sum_{q=1}^{w}K_{y_t,q}\rho_{t-1}K_{y_t,q}^\dagger
        \right)
    }.
    \label{eq:hqmm}
\end{equation}
The denominator is the predictive probability of the emitted symbol. Thus
HQMMs retain the forward-propagation structure of HMMs while allowing the
latent state to carry quantum coherence and allowing the transition-emission
mechanism to be modeled by completely positive maps. This additional
structure is one reason HQMMs can represent some sequential dependencies
more compactly than classical HMMs
\citep{srinivasan2018,adhikary2020expressiveness}.

\section{NS-RIS for HQMM Learning}
\label{sec:NSRIS}
This section first formulates HQMM learning on the Stiefel manifold and
describes the NS-RIS algorithm in Subsection~\ref{subsec:nsris-algorithm}.
Subsection~\ref{subsec:nsris-guarantees} then summarizes the main
mathematical assumptions and convergence guarantees; the detailed proofs are
deferred to Appendix~\ref{sec:proofs}.

\subsection{The NS-RIS Algorithm}
\label{subsec:nsris-algorithm}
Learning an HQMM requires estimating a collection of symbol-conditioned
Kraus operators that explain the observed sequences while satisfying the
trace-preserving constraint that makes the model physically admissible.
Let
$D_{\mathrm{tr}}=\{B_i\}_{i=1}^N$ denote the training set, where
$B_i=(y_{i1},\ldots,y_{iT_i})$ is a discrete observation sequence. For a
single sequence $B=(y_1,\ldots,y_T)$ and initial density matrix $\rho_0$,
the log-likelihood is obtained by propagating the unnormalized quantum state
through the Kraus operators associated with the observed symbols:
\begin{equation}
    \label{eqn:hqmm_likelihood}
    \ell(K;B)
    =
    \log
    \operatorname{tr}
    \left(
        \sum_{q_T=1}^{w}
        K_{y_T,q_T}
        \cdots
        \left(
            \sum_{q_1=1}^{w}
            K_{y_1,q_1}\rho_0 K_{y_1,q_1}^{\dagger}
        \right)
        \cdots
        K_{y_T,q_T}^{\dagger}
    \right).
\end{equation}
Here, $K$ denotes the full Kraus tensor
$\{K_{y,q}:y=1,\ldots,s,\ q=1,\ldots,w\}$, where $s$ is the output alphabet
size, $w$ is the number of Kraus operators per output, and each
$K_{y,q}\in\mathbb{C}^{n\times n}$ acts on an $n\times n$-dimensional latent quantum
state. The learning objective is the empirical negative log-likelihood
\[
    F(K)
    =
    \frac{1}{N}\sum_{i=1}^{N} L(K;B_i),
    \qquad
    L(K;B_i)=-\ell(K;B_i).
\]

Direct optimization of $F$ is challenging for two reasons. First, the
likelihood in \eqref{eqn:hqmm_likelihood} involves repeated products of
noncommuting complex matrices. Second, the Kraus operators must jointly
satisfy
\[
    \sum_{y=1}^{s}\sum_{q=1}^{w}K_{y,q}^{\dagger}K_{y,q}
    =
    \mathbb{I}_n,
\]
which is a nonlinear matrix constraint. NS-RIS handles this constraint
geometrically by stacking all Kraus operators into a single matrix
$\Gamma=(K_{1,1}^T,\ldots,K_{s,w}^T)^T\in\mathbb{C}^{swn\times n}$.
With $p=swn$, the trace-preserving condition is equivalent to
$\Gamma^\dagger \Gamma=\mathbb{I}_n$, so the feasible parameter space is the complex Stiefel manifold
\[
    \St(p,n)
    =
    \{\Gamma\in\mathbb{C}^{p\times n}:\Gamma^\dagger \Gamma=\mathbb{I}_n\}.
\]
Thus HQMM learning can be formulated as constrained optimization over $\St(p,n)$.

\subsubsection{Riemannian Incremental Update}

Let $\Gamma_k$ denote the stacked Kraus matrix at the $k$-th inner
iteration. NS-RIS selects a mini-batch index set
$\mathcal{B}_k$ according to the current random
permutation of the training set; the incremental
single-sequence case corresponds to cardinality $|\mathcal{B}_k|=1$. For mathematical convenience, we
write the sequence loss as $L(\Gamma_k;B_{i_k})$ for $i_k\in\mathcal{B}_k$. This denotes the same negative
log-likelihood $L(K_k;B_{i_k})=-\ell(K_k;B_{i_k})$ defined above, after identifying the
Kraus tensor $K_k$ with its stacked Stiefel representation
$\Gamma_k$. Thus, whenever
$L(\Gamma_k;B_{i_k})$ appears, the Kraus operators used in the likelihood
\eqref{eqn:hqmm_likelihood} are the block components of $\Gamma_k$. The
Euclidean gradient used at this iteration is
\[
    G_k
    =
    \frac{1}{|\mathcal{B}_k|}\sum_{i_k\in\mathcal{B}_k}
    \nabla_{\Gamma}L(\Gamma_k;B_{i_k})
    \in\mathbb{C}^{p\times n}.
\]
Since $\Gamma_k$ is constrained to the
complex Stiefel manifold, the update direction must lie in the tangent space
\[
    T_{\Gamma_k}\St(p,n)
    =
    \{\xi\in\mathbb{C}^{p\times n}:\Gamma_k^\dagger\xi+\xi^\dagger\Gamma_k=0\};
\]
This tangent-space equation is the first-order linearization of the Stiefel
constraint $\Gamma_k^\dagger\Gamma_k=\mathbb{I}_n$.
NS-RIS uses the orthogonal projection
\begin{equation}
\label{eqn:projection}
    \Pi_{\Gamma_k}(G_k)
    =
    G_k-\frac{1}{2}\Gamma_k\left(G_k^\dagger \Gamma_k+\Gamma_k^\dagger G_k\right),
\end{equation}
which belongs to $T_{\Gamma_k}\St(p,n)$ whenever
$\Gamma_k^\dagger\Gamma_k=\mathbb{I}_n$. Rather than forming a full
gradient over all training sequences, NS-RIS processes sequences
incrementally and maintains a momentum-smoothed tangent direction
\[
    M_{k+1}
    =
    \beta M_k+(1-\beta)\Pi_{\Gamma_k}(G_k),
    \qquad 0\leq \beta<1.
\]

\begin{figure}[t!]
	\centering
	\includegraphics[width=1\linewidth]{\detokenize{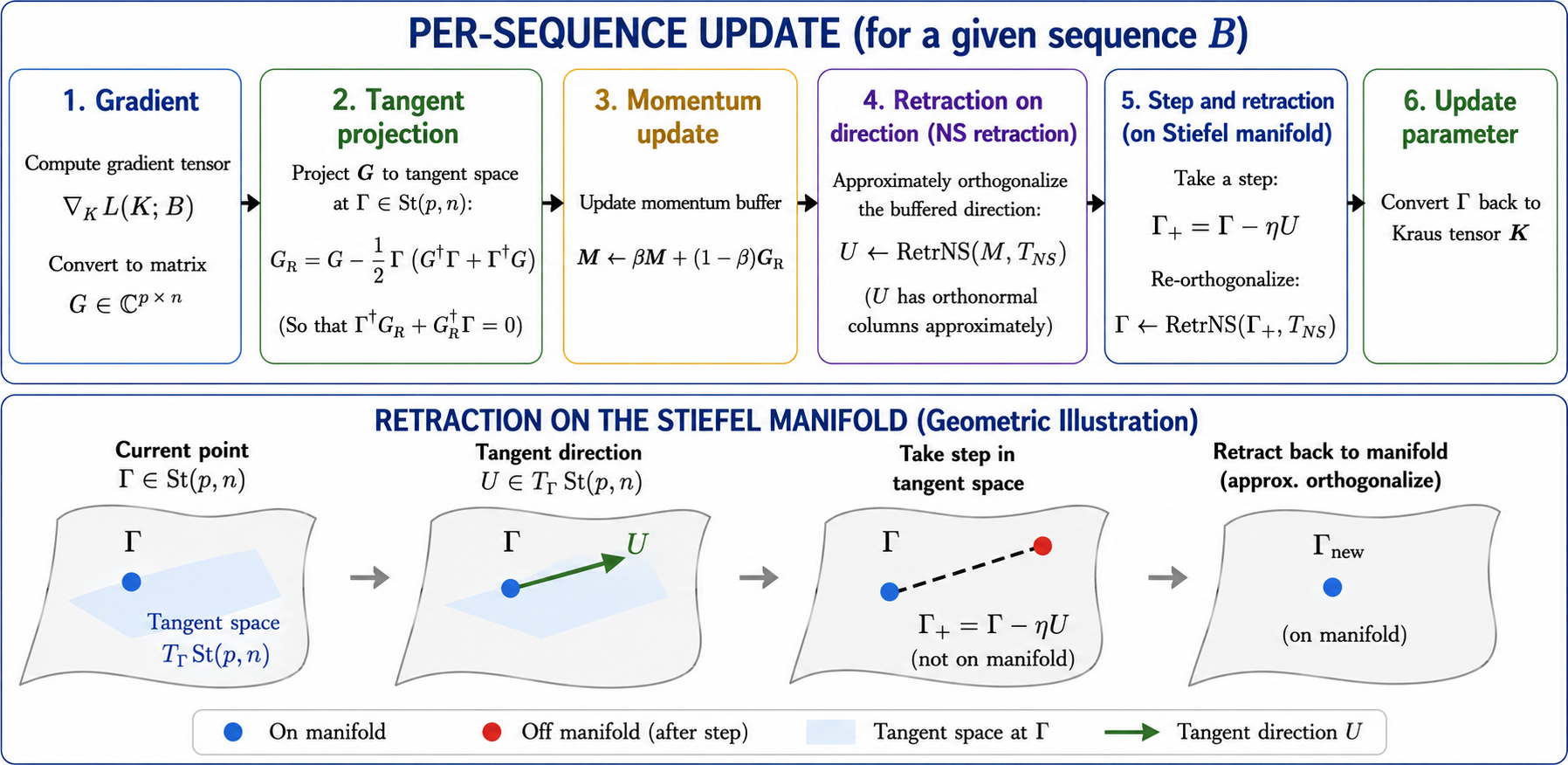}}
	%\label{fig:synth-hmm-w1}
\end{figure}

\subsubsection{Newton--Schulz Retraction}

The retraction step is implemented using a Newton--Schulz approximate
orthogonalization map, denoted by $\RetrNS(\cdot,T_{\mathrm{NS}})$, where
$T_{\mathrm{NS}}$ is the prescribed number of Newton--Schulz iterations.
Given a matrix $Z\in\mathbb{C}^{p\times n}$, the procedure first rescales
$ Z_0=\frac{Z}{\|Z\|_F+\varepsilon}$, where $\|\cdot\|_F$ denotes the
Frobenius norm,
and then applies, for $t=0,\ldots,T_{\mathrm{NS}}-1$,
\[
    H_t=Z_t^\dagger Z_t,
    \qquad
    Z_{t+1}
    =
    aZ_t+Z_t(bH_t+cH_t^2),
\]
with fixed coefficients $a=3.4445$, $b=-4.7750$, and $c=2.0315$.
The output
$
    \RetrNS(Z,T_{\mathrm{NS}})=Z_{T_{\mathrm{NS}}}
$
serves as a decomposition-free approximate retraction toward the Stiefel
manifold. 

At iteration $k$, NS-RIS uses this map in two distinct ways; rigorous justification is provided in Proposition \ref{prop:operator-steepest-retraction}. First,
it is applied to the momentum buffer to obtain a polar-like, normalized search
direction,
\[
    \widetilde{M}_{k+1}=\RetrNS(M_{k+1},T_{\mathrm{NS}}),
\]
which plays the role of an orthogonalized descent direction. This step is
analogous to replacing the raw momentum by its polar factor, but avoids an
explicit singular value decomposition. Second, after moving along this
direction, the intermediate matrix generally no longer satisfies the Stiefel
constraint. Indeed, even if $\Gamma_k^\dagger\Gamma_k=\mathbb{I}_n$ and
$\widetilde{M}_{k+1}$ has approximately orthonormal columns, the matrix
$\Gamma_{k+\frac{1}{2}}=\Gamma_k-\eta_k\widetilde{M}_{k+1}$ need not satisfy
$\Gamma_{k+\frac{1}{2}}^\dagger\Gamma_{k+\frac{1}{2}}=\mathbb{I}_n$.
Therefore, NS-RIS applies the same Newton--Schulz orthogonalization a second
time as an efficient approximation to the polar projection of the updated
parameter matrix back to the feasible Stiefel manifold:
\[
    \Gamma_{k+\frac{1}{2}}=\Gamma_k-\eta_k\widetilde{M}_{k+1},
    \qquad
    \Gamma_{k+1}=\RetrNS(\Gamma_{k+\frac{1}{2}},T_{\mathrm{NS}}).
\]
Thus, the first call to $\RetrNS$ controls the geometry of the update
direction, whereas the second call enforces the constraint
$\Gamma_{k+1}^\dagger\Gamma_{k+1}=\mathbb{I}_n$ required by the stacked Kraus
representation of a valid HQMM. 

The updated matrix $\Gamma_{k+1}$ is finally
reshaped back into Kraus-operator form. At the end of each epoch, the stepsize
is decayed geometrically,
$
    \eta_{e+1}=\alpha\eta_e$, for 
    $0<\alpha\leq 1$,
where $\alpha$ is a fixed decay factor controlling how quickly the learning
rate decreases across epochs. When validation data are available, the
parameter tensor with the
largest validation log-likelihood is retained:
\[
    K_{\mathrm{best}}
    =
    \arg\max_{K_e}\ell_{\mathrm{val}}(K_e).
\]
The complete NS-RIS training procedure is summarized in
Algorithm~\ref{alg:learn-qgm4}. The Newton--Schulz orthogonalization subroutine used in
Algorithm~\ref{alg:learn-qgm4} is given explicitly in
Algorithm~\ref{alg:ns-orth}.

\begin{algorithm}
\small
\caption{NS-RIS for HQMM Learning}
\label{alg:learn-qgm4}
\begin{algorithmic}[1]
\Require Training sequences $D_{\mathrm{tr}}$, initial Kraus tensor $K_0$, initial stepsize $\eta_0$, momentum parameter $\beta$, decay factor $\alpha$, Newton--Schulz iteration count $T_{\mathrm{NS}}$, number of epochs $E$, optional validation set $D_{\mathrm{val}}$
\Ensure Best Kraus tensor $K_{\mathrm{best}}$
\State Convert $K_0$ to a Stiefel matrix $\Gamma \in \St(p,n)$, where $p=swn$
\State Set $K_{\mathrm{best}} \gets K_0$, $\ell_{\mathrm{best}} \gets -\infty$
\State Set stepsize $\eta \gets \eta_0$
\State Initialize momentum buffer $M \gets 0_{p \times n}$
\State Record initial train and validation metrics in $\mathcal{H}$
\For{$e = 1,\ldots,E$}
    \State Randomly permute the training sequence indices
    \State Reset epoch running totals
    \For{each sequence index $i$ in the permutation}
        \State Let $B \gets D_{\mathrm{tr}}[i,:]$
        \State Compute the gradient tensor $\nabla_K L(K;B)$ using the HQMM gradient routine
        \State Convert $\nabla_K L(K;B)$ to a matrix $G \in \mathbb{C}^{p \times n}$
        \State Project $G$ to the tangent space at $\Gamma$:
        \[
            G_R
            =
            G - \frac{1}{2}\Gamma\left(G^\dagger \Gamma + \Gamma^\dagger G\right)
        \]
        \State Update the momentum buffer:
        \[
            M \gets \beta M + (1-\beta)G_R
        \]
        \State Approximately orthogonalize the buffered direction:
        \[
            \widetilde{M} \gets \RetrNS(M,T_{\mathrm{NS}})
        \]
        \State Take a step and approximately project back to the Stiefel manifold:
        \[
            \Gamma_+ \gets \Gamma - \eta \widetilde{M},
            \qquad
            \Gamma \gets \RetrNS(\Gamma_+,T_{\mathrm{NS}})
        \]
        \State Convert $\Gamma$ back to the Kraus tensor $K$
    \EndFor
    \If{$D_{\mathrm{val}}$ is provided}
        \State Evaluate validation log-likelihood $\ell_{\mathrm{val}}$ 
        \If{$\ell_{\mathrm{val}} > \ell_{\mathrm{best}}$}
            \State $\ell_{\mathrm{best}} \gets \ell_{\mathrm{val}}$
            \State $K_{\mathrm{best}} \gets K$
        \EndIf
    \Else
        \State $K_{\mathrm{best}} \gets K$
    \EndIf
    \State Decay the stepsize: $\eta \gets \alpha\eta$
\EndFor
\State \Return $K_{\mathrm{best}}$
\end{algorithmic}
\end{algorithm}

\begin{algorithm}
\small
\caption{Newton--Schulz Approximate Orthogonalization}
\label{alg:ns-orth}
\begin{algorithmic}[1]
\Require Matrix $Z \in \mathbb{C}^{p \times n}$, number of iterations $T_{\mathrm{NS}}$
\Ensure Approximately orthogonalized matrix $Z$
\State Set coefficients $a=3.4445$, $b=-4.7750$, $c=2.0315$
\State Normalize $Z \gets Z / (\|Z\|_F + \epsilon)$
\If{$p < n$}
    \State $Z \gets Z^\dagger$
    \State Mark that $Z$ was transposed
\EndIf
\For{$t=1,\ldots,T_{\mathrm{NS}}$}
    \State $H \gets Z^\dagger Z$
    \State $P \gets bH + cH^2$
    \State $Z \gets aZ + ZP$
\EndFor
\If{$Z$ was transposed}
    \State $Z \gets Z^\dagger$
\EndIf
\State \Return $Z$
\end{algorithmic}
\end{algorithm}

\subsection{Mathematical Guarantees}
\label{subsec:nsris-guarantees}
We now state the assumptions and main mathematical guarantees for NS-RIS.
As $B$ is drawn i.i.d. from the empirical distribution over the
training set, we could, with a slight abuse of notation, write
\[
F(\Gamma)=\mathbb{E}_{B}[ L(\Gamma;B)],
\qquad
\Gamma\in\St(p,n),\quad p=swn.
\]
 We use the real Frobenius inner product
$\langle \Gamma, \Gamma'\rangle_F=\operatorname{Re}\operatorname{tr}(\Gamma^\dagger \Gamma')$.
The Riemannian gradient of $F$ at $\Gamma$ is
\[
    \operatorname{grad}F(\Gamma)
    =
    \Pi_{\Gamma}\bigl(\nabla F(\Gamma)\bigr),
\]
where $\nabla F(\Gamma)$ denotes the Euclidean gradient and $ \Pi_{\Gamma}(\cdot)$
is the orthogonal projection onto $T_\Gamma\St(p,n)$ (given in equation \eqref{eqn:projection}) under the real
Frobenius inner product. 

For any matrix $Z$, let $\Polar(Z)$ denote its thin polar factor. That is, if
$Z = U \Sigma V^\dagger$ is a thin singular value decomposition, then
\[
\Polar(Z) = UV^\dagger
\quad
\text{and}\quad
\langle Z, \Polar(Z) \rangle_F = \|Z\|_{1},
\]
where $\|\cdot\|_{1}$ denotes the Schatten--$1$  norm (also known as the nuclear norm). Throughout,
we measure smoothness with respect to the Schatten--$\infty$ norm,
$\|\cdot\|_{\infty}$  (also known as the operator norm), whose dual norm is the
Schatten--$1$ norm. This operator--nuclear geometry is particularly natural
for optimization algorithms based on polar factors or orthogonalized search
directions \citep{nesterov2013introductory,jaggi2013revisiting,beck2017first}, since
\[
\|Z\|_{1}
= \sup_{\|Y\|_{\infty}\le 1}\langle Z,Y\rangle_F
= \langle Z,\Polar(Z)\rangle_F,
\]
where the supremum is attained at $Y=\Polar(Z)$.

\begin{assumption}[Lipschitz smoothness]
	\label{assum:retraction-smoothness}
	The objective $F$ is bounded below by $F^*$ on $\St(p,n)$. Moreover, $F$ is continuously differentiable and $L$-Lipschitz smooth, i.e., for all  $\Gamma,\Gamma'\in\St(p,n)$
	there exists $L>0$ such that
	\begin{align*}
		\|\nabla F(\Gamma)-\nabla F(\Gamma')\|_1 \;\le\; L\,\|\Gamma-\Gamma'\|_\infty	\end{align*}
\end{assumption}

Assumption~\ref{assum:retraction-smoothness} is the standard smoothness
condition used in nonconvex first-order optimization. We therefore measure stationarity using the 
Schatten-$1$ norm of the Riemannian gradient. This criterion is stronger
than the usual Frobenius-norm criterion: since
$\|A\|_F\leq\|A\|_1$ for every matrix $A$, any point that is
$\epsilon$-stationary in nuclear norm is also $\epsilon$-stationary in
Frobenius norm. In this paper, the convergence metric is defined as follows:
\begin{definition}[$\epsilon$-stationary point]
	\label{def:stationarity}
	We call $\Gamma\in\mathbb{R}^{p \times n}$ an $\epsilon$-stationary point (in the nuclear norm) 
	if $\mathbb{E}[\|\operatorname{grad}F(\Gamma)\|_1]\le \epsilon$.
	Equivalently, we say an algorithm attains $\epsilon$-stationarity in $T$ steps if
	\begin{align*}
		\frac{1}{T}\sum_{t=1}^{T}\mathbb{E}[\|\operatorname{grad}F(\Gamma_{t-1})\|_1] \leq \epsilon.
	\end{align*}
\end{definition}

Proposition~\ref{prop:operator-steepest-retraction} explains the geometric role
of the Newton--Schulz steps in NS-RIS: the first step approximates the
operator-norm steepest descent direction through the polar factor of the
momentum, while the second step acts as an efficient Stiefel retraction that
restores feasibility after the update. The proof is deferred to
Appendix~\ref{sec:proofs}.
\begin{proposition}
\label{prop:operator-steepest-retraction}
Suppose Assumption~\ref{assum:retraction-smoothness} holds. Let
$M_{k+1}\in\mathbb{C}^{p\times n}$ be the momentum-smoothed 
gradient used by NS-RIS. For a feasible
first-order perturbation at $\Gamma_k$, represented by a curve $\gamma$ on
$\St(p,n)$ with initial velocity
$\Delta\in T_{\Gamma_k}\St(p,n)$, Lemma~\ref{lem:stiefel-first-order-expansion}
gives
\[
    F(\gamma(\eta_k))
    =
    F(\Gamma_k)
    +
    \eta_k
    \left\langle \operatorname{grad}F(\Gamma_k),\Delta\right\rangle_F
    +
    O(\eta_k^2).
\]
Replacing the full Riemannian gradient by its momentum-smoothed stochastic
surrogate $M_{k+1}$, NS-RIS uses the corresponding operator-norm linearized
model. The following ambient subproblem gives the steepest decrease direction
for this model; feasibility of the iterate is restored by the retraction step
below. Consider
\[
    \min_{\Delta\in\mathbb{C}^{p\times n}}
    \left\langle M_{k+1},\Delta\right\rangle_F
    \quad
    \text{subject to}
    \quad
    \|\Delta\|_\infty\leq 1 .
\]
If $M_{k+1}=U\Sigma V^\dagger$ is a thin singular value decomposition, then
an optimizer is
\[
    \Delta^\star=-UV^\dagger=-\Polar(M_{k+1}),
\]
and the optimal value is $-\|M_{k+1}\|_1$. Consequently, the first
Newton--Schulz orthogonalization in Algorithm~\ref{alg:learn-qgm4},
\[
    \widetilde{M}_{k+1}=\RetrNS(M_{k+1},T_{\mathrm{NS}}),
\]
computes a decomposition-free approximation of the operator-norm steepest
descent direction. The intermediate update
$\Gamma_{k+\frac12}=\Gamma_k-\eta_k\widetilde{M}_{k+1}$ is then mapped back
to the Stiefel manifold by the second Newton--Schulz orthogonalization. In
the exact polar projection case, this feasibility step solves
\[
    \min_{Q\in\St(p,n)}
    \|\Gamma_{k+\frac12}-Q\|_F^2
\]
and has the form
\[
    R(\Gamma_{k+\frac12})
    =
    \Gamma_{k+\frac12}
    \left(\Gamma_{k+\frac12}^{\dagger}
    \Gamma_{k+\frac12}\right)^{-1/2},
\]
and satisfies
$R(\Gamma_{k+\frac12})^\dagger R(\Gamma_{k+\frac12})=\mathbb{I}_n$ whenever
$\Gamma_{k+\frac12}$ has full column rank. The finite Newton--Schulz map
$\RetrNS(\Gamma_{k+\frac12},T_{\mathrm{NS}})$ is used as an efficient
approximation of this projection/retraction.
\end{proposition}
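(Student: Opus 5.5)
The proposition has three substantive claims: the expansion along a feasible curve, the solution of the operator-norm subproblem, and the characterization of the Frobenius-nearest Stiefel point. I would prove them in that order and treat the Newton--Schulz statements as interpretive consequences.

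\textbf{First-order expansion.} This is quoted from Lemma~\ref{lem:stiefel-first-order-expansion}. If needed, I would reprove it from Assumption~\ref{assum:retraction-smoothness}. Take $\gamma(0)=\Gamma_k$ and $\gamma'(0)=\Delta$, and assume $\gamma$ is twice differentiable. Then $\gamma(\eta)-\Gamma_k=\eta\Delta+O(\eta^2)$, and Lipschitz continuity of $\nabla F$ (nuclear against operator norm) gives
\[
F(\gamma(\eta))=F(\Gamma_k)+\langle\nabla F(\Gamma_k),\gamma(\eta)-\Gamma_k\rangle_F+O(\|\gamma(\eta)-\Gamma_k\|_\infty^2).
\]
Since $\Delta$ is tangent and $\Pi_{\Gamma_k}$ is the orthogonal projection onto $T_{\Gamma_k}\St(p,n)$, we have $\langle\nabla F,\Delta\rangle_F=\langle\operatorname{grad}F,\Delta\rangle_F$. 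This yields the stated expansion.

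\textbf{Operator-norm subproblem.} I would use the trace-duality (von Neumann) inequality
\[
|\langle M,\Delta\rangle_F|\le \|M\|_1\|\Delta\|_\infty.
\]
To prove it directly, write $\langle M,\Delta\rangle_F=\operatorname{Re}\operatorname{tr}(V\Sigma U^\dagger\Delta)=\sum_i\sigma_i\operatorname{Re}(u_i^\dagger\Delta v_i)$. Each term satisfies $|u_i^\dagger\Delta v_i|\le\|\Delta\|_\infty$ because $u_i$ and $v_i$ are unit vectors. Hence every feasible $\Delta$ has $\langle M,\Delta\rangle_F\ge-\|M\|_1$.

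Now take $\Delta^\star=-UV^\dagger$. Its singular values are all $1$, so $\|\Delta^\star\|_\infty=1$ and it is feasible. Also
\[
\langle M,-UV^\dagger\rangle_F=-\operatorname{Re}\operatorname{tr}(V\Sigma U^\dagger UV^\dagger)=-\operatorname{tr}\Sigma=-\|M\|_1,
\]
so $\Delta^\star$ attains the bound and is an optimizer. The degenerate case $M=0$ is trivial, since every feasible $\Delta$ gives value $0$.

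The statement that $\RetrNS$ ``approximates'' $\Polar$ needs a short spectral argument. Note $Z_{t+1}=Z_t\,\phi(Z_t^\dagger Z_t)$ with $\phi(x)=a+bx+cx^2$. If $Z_t=U\Sigma_t V^\dagger$, this gives $Z_{t+1}=U\,\Sigma_t\phi(\Sigma_t^2)V^\dagger$. So the singular vectors are preserved and each singular value evolves by the scalar map $\sigma\mapsto a\sigma+b\sigma^3+c\sigma^5$. After the Frobenius rescaling, all singular values lie in $(0,1]$. The iteration therefore maps $Z$ to $U\,g(\Sigma)V^\dagger$, where $g$ pushes values toward a neighborhood of $1$. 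This is exactly $\Polar(Z)$ up to a diagonal perturbation, which is all ``approximation'' means here.

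\textbf{Frobenius projection onto the Stiefel manifold.} For $Q\in\St(p,n)$,
\[
\|\Gamma_{k+\frac12}-Q\|_F^2=\|\Gamma_{k+\frac12}\|_F^2+n-2\langle\Gamma_{k+\frac12},Q\rangle_F.
\]
Minimizing is therefore the same as maximizing $\langle\Gamma_{k+\frac12},Q\rangle_F$ over $Q$ with $\|Q\|_\infty=1$. By the duality inequality above, $\langle\Gamma_{k+\frac12},Q\rangle_F\le\|\Gamma_{k+\frac12}\|_1$, with equality at $Q=\Polar(\Gamma_{k+\frac12})$.

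If $\Gamma_{k+\frac12}=U\Sigma V^\dagger$ has full column rank, then $\Sigma$ is invertible and
\[
\Gamma_{k+\frac12}(\Gamma_{k+\frac12}^\dagger\Gamma_{k+\frac12})^{-1/2}=U\Sigma V^\dagger V\Sigma^{-1}V^\dagger=UV^\dagger.
\]
This identifies the closed form, and $(UV^\dagger)^\dagger UV^\dagger=VV^\dagger=\mathbb{I}_n$ gives feasibility.

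\textbf{Main obstacle.} The genuinely delicate point is the meaning of ``approximation'' for the finite Newton--Schulz map. With the given quintic coefficients, the scalar map does not converge to $1$. It only lands in a band around $1$, and small singular values need several steps to rise. I would therefore state the claim qualitatively: $\RetrNS$ shares singular vectors with $\Polar$ and its error is $\max_i|g(\sigma_i)-1|$. Quantitative residuals would be deferred to the accuracy assumption used in Theorem~\ref{thm:nsris-convergence}.
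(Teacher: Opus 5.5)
Your proof is correct and follows the paper's argument. For the subproblem, both use spectral--nuclear duality with the bound attained at $-UV^\dagger$. For the projection, both expand $\|Y-Q\|_F^2=\|Y\|_F^2+n-2\langle Y,Q\rangle_F$, bound the real trace by $\|Y\|_1$, and compute $Y(Y^\dagger Y)^{-1/2}=UV^\dagger$; where the paper says $B^\dagger Q^\dagger A$ is a contraction, you reuse the duality inequality, which is the same bound.

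Your extra spectral analysis of the Newton--Schulz quintic goes beyond the paper's proof. You show that singular vectors are preserved, that singular values follow $\sigma\mapsto a\sigma+b\sigma^3+c\sigma^5$, and that $a+b+c=0.701\neq 1$, so the singular values only settle in a band around $1$. This correctly explains why the proposition can claim only a qualitative approximation and defers quantitative accuracy to Assumption~\ref{assum:finite-ns}.
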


\begin{remark}[Frobenius-norm steepest direction]
\label{rem:frobenius-steepest}
The usual stochastic-gradient direction is the steepest descent direction
under the Frobenius norm. Indeed, for a nonzero projected stochastic gradient
$g_k$, consider
\[
    \min_{\Delta\in\mathbb{C}^{p\times n}}
    \left\langle g_k,\Delta\right\rangle_F
    \quad
    \text{subject to}
    \quad
    \|\Delta\|_F\leq 1 .
\]
By Cauchy--Schwarz,
$\langle g_k,\Delta\rangle_F\geq-\|g_k\|_F\|\Delta\|_F\geq-\|g_k\|_F$,
and equality is attained by
$\Delta^\star=-g_k/\|g_k\|_F$. Thus a Frobenius-norm steepest step uses the
normalized negative gradient direction. In contrast,
Proposition~\ref{prop:operator-steepest-retraction} replaces this
Frobenius-aligned direction by an operator-norm steepest direction obtained
from the polar structure of the matrix momentum.
\end{remark}

\begin{assumption}[Stochastic projected gradients]
\label{assum:stochastic-gradients}
At iteration $k$, the sample index $i_k$ is drawn uniformly from the
training set conditional on the history $\mathcal{F}_k$, and
\[
    g_k
    =
    \Pi_{\Gamma_k}\bigl(\nabla_{\Gamma}L(\Gamma_k;B_{i_k})\bigr)
\]
is an unbiased estimator of the Riemannian gradient:
$\mathbb{E}[g_k\mid\mathcal{F}_k]=\operatorname{grad}F(\Gamma_k)$.
There is a constant $\sigma^2<\infty$ such that
\[
    \mathbb{E}\!\left[
    \|g_k-\operatorname{grad}F(\Gamma_k)\|_F^2
    \mid\mathcal{F}_k
    \right]
    \leq \sigma^2 .
\]
For a mini-batch of size $b$, the right-hand side is replaced by
$\sigma^2/b$.
\end{assumption}

Assumption~\ref{assum:stochastic-gradients} is the standard unbiased
bounded-variance condition for stochastic gradient methods. It follows when
the mini-batch samples are drawn independently from the empirical
distribution and the per-sample projected gradients have bounded second
moment; on the compact Stiefel manifold this boundedness is automatic when
the per-sample gradients are continuous. The factor $\sigma^2/b$ records the
usual variance reduction from averaging $b$ i.i.d. samples. Assumption~\ref{assum:finite-ns} below only requires the finite Newton--Schulz
orthogonalization to approximate the exact polar direction uniformly within a
fixed tolerance; it does not require exact projection or an exact singular
value decomposition. This is mild in practice because Newton--Schulz
iteration for approximating the polar factor is a classical and well-studied
approach, with convergence properties established in early work on iterative
orthogonalization and polar decomposition and further refined by modern
analyses of Newton- and Halley-type iterations for matrix polar decomposition
and related matrix sign iterations~\citep{Bjorck1971,Higham1986,Nakatsukasa2010,Higham2008}.

\begin{assumption}[Finite NS accuracy]
\label{assum:finite-ns}
Define the exact polar direction $P_{k+1}=\Polar(M_{k+1})$ and the finite
Newton--Schulz direction
$\widetilde{M}_{k+1}=\RetrNS(M_{k+1},T_{\mathrm{NS}})$ used in
Algorithm~\ref{alg:learn-qgm4}.
For the chosen value of $T_{\mathrm{NS}}$, there is constant
$\varepsilon_{\mathrm{NS}}\in[0,1)$ such that
\[
    \|\widetilde{M}_{k+1}-P_{k+1}\|_{\mathrm{op}}\leq\varepsilon_{\mathrm{NS}},
\]
for all iterations considered.
\end{assumption}

Theorem~\ref{thm:nsris-convergence} gives the finite-time convergence
guarantee for NS-RIS. The proof is deferred to Appendix~\ref{sec:proofs}.
\begin{theorem}
	\label{thm:nsris-convergence}
	Suppose Assumptions~\ref{assum:retraction-smoothness},
	\ref{assum:stochastic-gradients}, and~\ref{assum:finite-ns} hold. Let
	$D=F(\Gamma_0)-F^*$, let
	\[
	    G_1\geq\sup_{\Gamma\in\St(p,n)}\|\nabla F(\Gamma)\|_1,
	    \qquad
	    L_R=2(L+G_1),
	\]
	and run
	Algorithm~\ref{alg:learn-qgm4} for $K$ inner iterations with mini-batch size
	$b$ and a fixed number $T_{\mathrm{NS}}$ of Newton--Schulz iterations.
	Let $0\leq\beta<1$, let $\eta_k=\eta>0$ be constant, and define
	\[
	    E_k=\Gamma_k-\bigl(\Gamma_{k-1}-\eta\widetilde{M}_{k}\bigr)
	    \quad\text{and}\quad
	    \rho_k=\|E_k\|_\infty .
	\]
	For
	\[
	    \bar\rho_K=\frac{1}{K}\sum_{k=1}^{K}\mathbb{E}\rho_k,
	    \qquad
	    \bar q_K=\frac{1}{K}\sum_{k=1}^{K}\mathbb{E}\rho_k^2,
	\]
	and
	\[
	\begin{aligned}
	    \mathcal{T}_K
	    =
	    \frac{1}{(1-\beta)K}
	    \mathbb{E}\|M_1-\operatorname{grad}F(\Gamma_0)\|_1
	    +
	    \frac{\sqrt{n}\,\sigma}{\sqrt{b}}
	    +
	    \frac{\beta L_R}{1-\beta}
	    \left(\eta (1+\varepsilon_{\mathrm{NS}})+\bar\rho_K\right).
	\end{aligned}
	\]
	the iterates satisfy
	\[
	\begin{aligned}
	\frac{1}{K}\sum_{k=0}^{K-1}
	\mathbb{E}\|\operatorname{grad}F(\Gamma_k)\|_1
	\leq\;&
	\frac{D}{\eta(1-\varepsilon_{\mathrm{NS}})K}
	+
	\frac{L_R(1+\varepsilon_{\mathrm{NS}})^2\eta}{1-\varepsilon_{\mathrm{NS}}}
	+
	\frac{2G_1\bar\rho_K+L_R\bar q_K}
	{\eta(1-\varepsilon_{\mathrm{NS}})}
+
\frac{2\mathcal{T}_K}{1-\varepsilon_{\mathrm{NS}}}
		 .
	\end{aligned}
	\]
\end{theorem}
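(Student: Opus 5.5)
The argument follows the standard descent-lemma template for momentum methods with orthogonalized (polar) directions, in the style of Muon/LMO analyses, adapted to the Stiefel manifold.

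\textbf{Step 1: one-step descent inequality with $L_R$.} I would first establish a retraction-type smoothness bound. For $\Gamma,\Gamma'\in\St(p,n)$,
\[
F(\Gamma')\le F(\Gamma)+\langle \operatorname{grad}F(\Gamma),\Gamma'-\Gamma\rangle_F+\tfrac{L_R}{2}\|\Gamma'-\Gamma\|_\infty^2 .
\]
To prove it, start from the Euclidean bound $F(\Gamma')\le F(\Gamma)+\langle\nabla F(\Gamma),\Gamma'-\Gamma\rangle_F+\tfrac L2\|\Gamma'-\Gamma\|_\infty^2$, which follows from Assumption~\ref{assum:retraction-smoothness} by duality along the segment. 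Then write $\nabla F=\operatorname{grad}F+(\nabla F-\Pi_\Gamma\nabla F)$. The normal component is $\Gamma S$ with $S$ Hermitian and $\|S\|_1\le\|\nabla F\|_1\le G_1$. Since $\Gamma^\dagger(\Gamma'-\Gamma)+(\Gamma'-\Gamma)^\dagger\Gamma=-(\Gamma'-\Gamma)^\dagger(\Gamma'-\Gamma)$ on the manifold, the normal term is bounded by $\tfrac12 G_1\|\Gamma'-\Gamma\|_\infty^2$. This yields the constant $L_R=2(L+G_1)$, with some slack.

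\textbf{Step 2: plug in the update.} Apply the inequality with $\Gamma'=\Gamma_{k+1}$ and $\Gamma'-\Gamma=-\eta\widetilde M_{k+1}+E_{k+1}$, where $\|\widetilde M_{k+1}\|_\infty\le 1+\varepsilon_{\mathrm{NS}}$ by Assumption~\ref{assum:finite-ns}.
\begin{itemize}
\item The linear term in $E$ is at most $G_1\rho_{k+1}$, using $\|\operatorname{grad}F\|_1\le 2\|\nabla F\|_1$ loosely, which produces the $2G_1$ factor.
\item The quadratic term is at most $L_R\bigl(\eta^2(1+\varepsilon_{\mathrm{NS}})^2+\rho_{k+1}^2\bigr)$.
\item The main linear term is $-\eta\langle \operatorname{grad}F(\Gamma_k),\widetilde M_{k+1}\rangle_F$. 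Split it as
\[
\langle M_{k+1},P_{k+1}\rangle+\langle M_{k+1},\widetilde M_{k+1}-P_{k+1}\rangle+\langle \operatorname{grad}F-M_{k+1},\widetilde M_{k+1}\rangle,
\]
and lower bound it by $(1-\varepsilon_{\mathrm{NS}})\|M_{k+1}\|_1-(1+\varepsilon_{\mathrm{NS}})\|M_{k+1}-\operatorname{grad}F(\Gamma_k)\|_1$.
\item Use $\|M_{k+1}\|_1\ge\|\operatorname{grad}F(\Gamma_k)\|_1-\|M_{k+1}-\operatorname{grad}F(\Gamma_k)\|_1$.
\end{itemize}
Together this gives $\eta(1-\varepsilon_{\mathrm{NS}})\|\operatorname{grad}F(\Gamma_k)\|_1\le F(\Gamma_k)-F(\Gamma_{k+1})+2\eta\|M_{k+1}-\operatorname{grad}F(\Gamma_k)\|_1+\dots$. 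Summing over $k$, telescoping into $D$, and dividing by $\eta(1-\varepsilon_{\mathrm{NS}})K$ produces every term of the bound except the tracking average, which is the $2\mathcal T_K/(1-\varepsilon_{\mathrm{NS}})$ piece.

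\textbf{Step 3: momentum tracking (main obstacle).} Let $\delta_k=M_{k+1}-\operatorname{grad}F(\Gamma_k)$. It satisfies the recursion
\[
\delta_k=\beta\delta_{k-1}+\beta\bigl(\operatorname{grad}F(\Gamma_{k-1})-\operatorname{grad}F(\Gamma_k)\bigr)+(1-\beta)\bigl(g_k-\operatorname{grad}F(\Gamma_k)\bigr).
\]
There are two technical points here.
\begin{itemize}
\item The Riemannian gradient is Lipschitz with constant $L_R$ in the pair $(\|\cdot\|_\infty\to\|\cdot\|_1)$. This follows from Assumption~\ref{assum:retraction-smoothness} plus the variation of the projection, which contributes $G_1$ terms. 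It gives $\|\operatorname{grad}F(\Gamma_{k-1})-\operatorname{grad}F(\Gamma_k)\|_1\le L_R\bigl(\eta(1+\varepsilon_{\mathrm{NS}})+\rho_k\bigr)$.
\item The noise term is bounded in nuclear norm via $\|X\|_1\le\sqrt n\|X\|_F$ and Jensen, giving $\sqrt n\sigma/\sqrt b$.
\end{itemize}
I would unroll the recursion, take expectations, and bound the geometric sums: $\sum_j\beta^j\le 1/(1-\beta)$, and the noise weights $(1-\beta)\beta^j$ sum to at most $1$. Averaging over $k$ then gives
\[
\frac1K\sum_k\mathbb E\|\delta_k\|_1\le\frac{\mathbb E\|\delta_0\|_1}{(1-\beta)K}+\frac{\sqrt n\sigma}{\sqrt b}+\frac{\beta L_R}{1-\beta}\bigl(\eta(1+\varepsilon_{\mathrm{NS}})+\bar\rho_K\bigr)=\mathcal T_K.
\]
The noise bound is deliberately crude: no martingale cancellation is used, only the triangle inequality on the unrolled sum. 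That is why $\sigma$ does not decay with $\beta$ in $\mathcal T_K$.

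The delicate part is the index bookkeeping between $\rho_k$, $E_k$, and which gradient pairs appear, together with the Lipschitz constant of $\operatorname{grad}F$ on the Stiefel manifold. I would verify these carefully, using the explicit projection formula \eqref{eqn:projection} and the fact that $\|\Gamma\|_\infty=1$ on $\St(p,n)$.
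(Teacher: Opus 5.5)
Your proposal is correct and follows the paper's proof essentially step for step. The shared steps are the Stiefel descent inequality with the normal part made quadratic via $\Gamma^\dagger\Delta+\Delta^\dagger\Gamma=-\Delta^\dagger\Delta$, the Lipschitz constant $L_R=2(L+G_1)$ for $\operatorname{grad}F$, the polar-factor split with $(1\pm\varepsilon_{\mathrm{NS}})$ factors, and the momentum-error recursion with the crude $\sqrt{n}\,\sigma/\sqrt{b}$ noise bound. Your two deviations only reorder the algebra and give identical constants: you apply $\|M_{k+1}\|_1\ge\|\operatorname{grad}F(\Gamma_k)\|_1-\|M_{k+1}-\operatorname{grad}F(\Gamma_k)\|_1$ per step rather than after summing, and you unroll the recursion rather than summing it.

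One small slip: state the $E$-term as $2G_1\rho_{k+1}$ from the outset, via $\|\operatorname{grad}F\|_1\le\|\nabla F\|_1+\|\Gamma S_\Gamma\|_1$. The bound $G_1\rho_{k+1}$ that you first write is not justified, because the tangent projection is not nonexpansive in the nuclear norm.
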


\begin{remark}[Interpretation of the NS-RIS bound]
	\label{rem:nsris-convergence-interpretation}
	Theorem~\ref{thm:nsris-convergence} shows that NS-RIS drives the averaged
	Riemannian stationarity measure to an explicit neighborhood whose size is
	controlled by the telescoping descent term $D/(\eta K)$, the smoothness term
	$L_R\eta$, the finite Newton--Schulz feasibility residuals, and the tracking
	quantity $\mathcal{T}_K$. The tracking quantity $\mathcal{T}_K$ contains the
	initial momentum mismatch, the stochastic mini-batch noise
	$\sqrt{n}\sigma/\sqrt{b}$, and the drift of the Riemannian gradient along the
	finite-accuracy NS-RIS trajectory. For the
	single-sequence incremental version described in Algorithm~\ref{alg:learn-qgm4},
	$b=1$, so the stochastic neighborhood is governed by the single-sample
	variance level. The theorem
	says that the algorithm is stable under finite Newton--Schulz accuracy.
	
	The feasibility-correction contribution in the theorem is
	\[
	\frac{2G_1\bar\rho_K+L_R\bar q_K}
	{\eta(1-\varepsilon_{\mathrm{NS}})}
	+
	\frac{2 }{1-\varepsilon_{\mathrm{NS}}}
	\frac{\beta L_R}{1-\beta}\bar\rho_K .
	\]
	If the second Newton--Schulz retraction error satisfies
	$\bar\rho_K=O(\eta^2)$ and $\bar q_K=O(\eta^4)$, then the first fraction is
	\[
	O\!\left(\frac{\eta^2+\eta^4}{\eta}\right)
	=
	O(\eta),
	\]
	and the momentum-tracking residual term is $O(\eta^2)$ for fixed
	$\beta<1$. Therefore, the additional feasibility correction contributes only
	$O(\eta)$ to the overall stationarity bound.
	Consequently, with sufficiently accurate second Newton--Schulz
	orthogonalization, NS-RIS behaves like a stochastic Riemannian descent method
	up to controlled finite-accuracy and mini-batch noise floors. 
\end{remark}

\section{Numerical Analysis}
\label{sec:numerical-analysis}
This section describes the experimental setup and evaluation metric in
Subsection~\ref{sec:numerical-setup}, then reports performance on the
synthetic HMM benchmark in Subsection~\ref{sec:synthetic-hmm-results} and
the synthetic HQMM benchmark in Subsection~\ref{sec:synthetic-hqmm-results}.

\subsection{Setup and Evaluation Metric}
\label{sec:numerical-setup}

We use the experimental setting and evaluation convention as in
\citet{adhikary2020expressiveness}, which is a standard benchmark
protocol for HQMM learning algorithms. For an HQMM with latent dimension
$n$, output alphabet size $s$, and $w$ Kraus operators per output, we write
the model as an $(n,s,w)$-HQMM. Equivalently, after stacking all Kraus
operators vertically, the optimization variable is a complex Stiefel matrix
$\Gamma \in \mathbb{C}^{nsw \times n}$ whose columns satisfy
$\Gamma^\dagger \Gamma=\mathbb{I}_n$.

Unless otherwise stated, the latent density matrix is initialized as a
random Hermitian positive semidefinite matrix with unit trace, and the
stacked Kraus matrix is initialized as a random orthonormal matrix. Training
minimizes the negative log-likelihood of the observed sequences under the
HQMM. During optimization, validation performance is monitored after each
epoch, and the parameter tensor with the largest validation log-likelihood is
retained for final evaluation. This selection rule is equivalent to choosing
the model that assigns the highest probability to the validation set, while
leaving the reported test metric independent of the particular sequence
lengths used during training.

We compare against three standard learning procedures. The Givens Search
(GS) method of \citet{srinivasan2018} maintains feasibility by applying
local Givens rotations, or unitary transformations, to the stacked Kraus
matrix and accepting likelihood-improving updates. The constrained
optimization on the Stiefel manifold (COSM) method of
\citet{adhikary2020expressiveness} instead performs gradient-based updates
directly on the Stiefel manifold using a retraction that we propose. In the experiments of
\citet{adhikary2020expressiveness}, COSM converged to better optima faster
than GS and scaled to larger HQMMs that were too slow to train with GS. The
Expectation-Maximization (EM) algorithm is the standard maximum-likelihood
procedure for HMMs, alternating between inference of latent-state
responsibilities and parameter re-estimation; we use EM-trained HMMs as the
classical baseline. On the synthetic HMM benchmark,
\citet{adhikary2020expressiveness} report that small HQMMs can outperform
small HMMs, although this advantage does not hold for the $6$-state HMM
baseline, so EM remains a strong reference method when the data are generated
by an HMM.

In this section, we evaluate the synthetic HMM and HQMM
benchmarks. For the HMM benchmark, we follow the synthetic data setting of
\citet{srinivasan2018} and \citet{adhikary2020expressiveness}: the data are generated by an HMM with $6$ hidden
states and $6$ possible outputs, and we use the same $20$ training and $10$
validation sequences of length $3000$ as they did. Each long sequence is split into
$300$ shorter sequences, with a burn-in of $100$, before training HQMMs for
$60$ epochs and selecting the model with the highest validation description
accuracy. For the HQMM benchmark, we use the synthetic HQMM data in those two papers, generated by a
$2$-hidden-state, $6$-output HQMM. This benchmark uses the same $20$ training
and $10$ validation sequences of length $3000$, the same split into $300$
shorter sequences, and the same burn-in of $100$ for training; evaluation is
performed on $10$ test sequences of length $3000$ using a burn-in of $1000$.
This shortened training protocol reduces run time without changing the
amount of training data processed.

For all these experiments, we report description accuracy,
a scaled log-likelihood metric used in prior HQMM work
\citep{srinivasan2018,adhikary2020expressiveness}. Given a test sequence $Y$ of length
$\ell$, an alphabet of size $s$, and a trained model $\mathbb{D}$, the
description accuracy is
\[
    DA
    =
    f\left(
        1 + \frac{\log_s P(Y \mid \mathbb{D})}{\ell}
    \right),
\]
where
\[
    f(x)
    =
    \begin{cases}
        \tanh(x/8), & x \leq 0,\\
        x, & x > 0.
    \end{cases}
\]
The normalization by $\ell$ makes the score comparable across sequences of
different lengths. A value of $DA=1$ corresponds to assigning probability
one to the observed sequence, while $DA>0$ indicates performance better than
the uniform random baseline over the output alphabet. Thus, higher values of $DA$ indicate better predictive performance. When multiple test
sequences are evaluated, we report the mean description accuracy and use the
standard deviation across test sequences as the error bar. For labeled
sequence-classification experiments, we follow the same likelihood-based
decision rule as in HQMM benchmarks: one model is trained for each class, and
the predicted label is the class whose model assigns the highest likelihood
to the test sequence; the reported metric is average classification
accuracy.

\subsection{Synthetic HMM Test Performance}
\label{sec:synthetic-hmm-results}

Figure~\ref{fig:synth-hmm-w1} compares HQMM learning methods under different hidden dimensions and Kraus ranks. In the left panel, we fix the Kraus rank to $w=1$ and vary the hidden dimension $n$.  Under this setting, NS-RIS consistently achieves the highest test description accuracy among all HQMM training methods. The advantage becomes more pronounced as the hidden dimension increases, with NS-RIS substantially outperforming both GS and COSM for larger $n$, while also exceeding the corresponding classical EM baseline. In contrast, GS and COSM fail to surpass EM when $n=6$. 
In the right panel, we fix the hidden dimension at $n=6$ and vary the Kraus rank $w$ to evaluate the benefit of increasing the number of Kraus operators. Since EM does not use Kraus operators, its performance depends only on $(n,s)$ and is therefore independent of $w$.  NS-RIS maintains a clear performance advantage across all tested Kraus ranks, demonstrating that the proposed update rule can effectively exploit the richer HQMM parameterization. GS results are omitted for $w>1$, because the method becomes computationally prohibitive in these cases as revealed in  \citet{adhikary2020expressiveness}.

\begin{figure}[t!]
    \centering
    \includegraphics[width=1\linewidth]{\detokenize{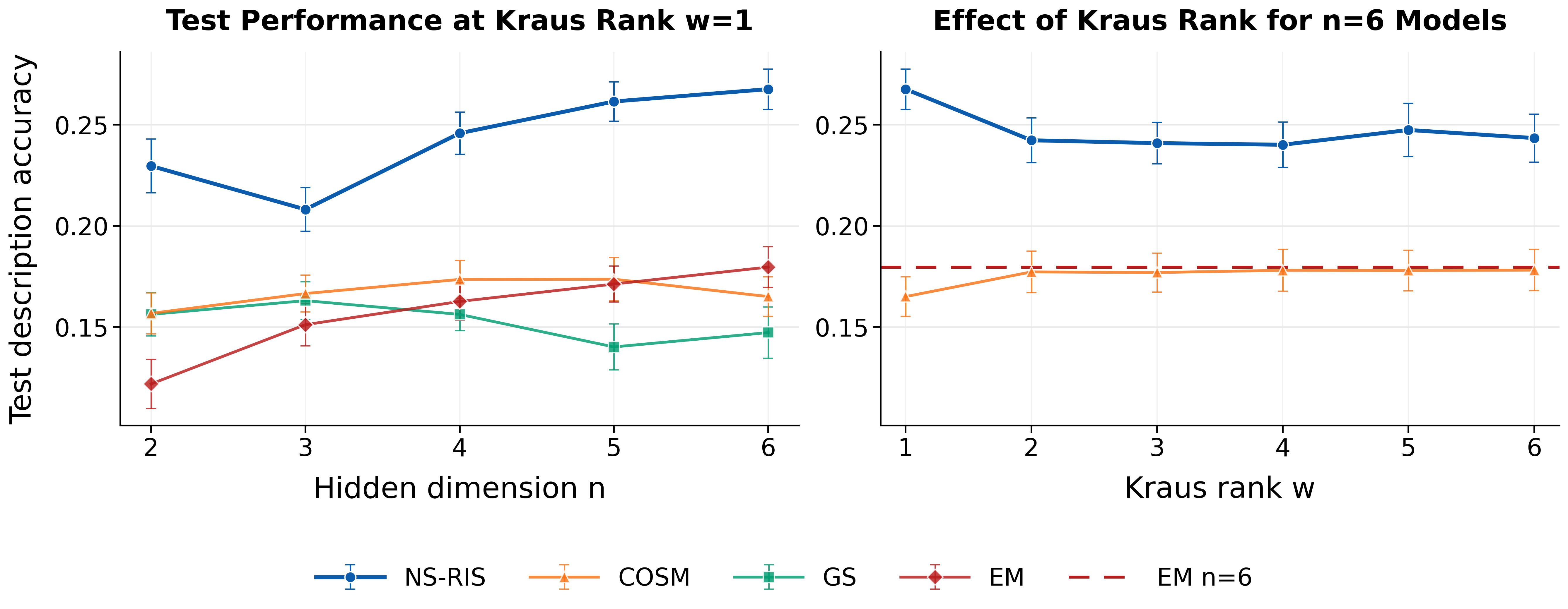}}
\caption{
	Test description accuracy for HQMM learning methods under varying hidden dimensions and Kraus ranks. 
	Left: Performance with Kraus rank fixed at $w=1$ while varying the hidden dimension $n$. We report the mean test description accuracy, with error bars indicating the standard deviation across test sequences. 
	Right: Performance for models with hidden dimension $n=6$ while varying the Kraus rank $w$. The EM baseline is shown as a horizontal dashed line because its performance is independent of $w$.
}
    \label{fig:synth-hmm-w1}
\end{figure}

In Figure~\ref{fig:synth-hmm-runtime}, we compare the running times of the HQMM learning methods under varying hidden dimensions and Kraus ranks. In the left panel, we fix the Kraus rank at $w=1$ and vary the hidden dimension $n$. NS-RIS and COSM exhibit nearly identical running times across all tested hidden dimensions, remaining relatively stable as $n$ increases. In contrast, GS incurs substantially higher computational cost, with runtime increasing steadily as the hidden dimension grows. This highlights the significantly better scalability of NS-RIS and COSM with respect to the latent dimension.
In the right panel, we fix the hidden dimension at $n=6$ and vary the Kraus rank $w$. Both NS-RIS and COSM experience increased computational cost as the Kraus rank grows, reflecting the larger HQMM parameterization. However, NS-RIS remains competitive with COSM across all tested Kraus ranks and is slightly faster for larger values of $w$. Combined with the accuracy results in Figure~\ref{fig:synth-hmm-w1}, these findings demonstrate that NS-RIS achieves stronger predictive performance without incurring additional computational overhead relative to existing scalable HQMM training methods.

\begin{figure}[t!]
    \centering
    \includegraphics[width=1\linewidth]{\detokenize{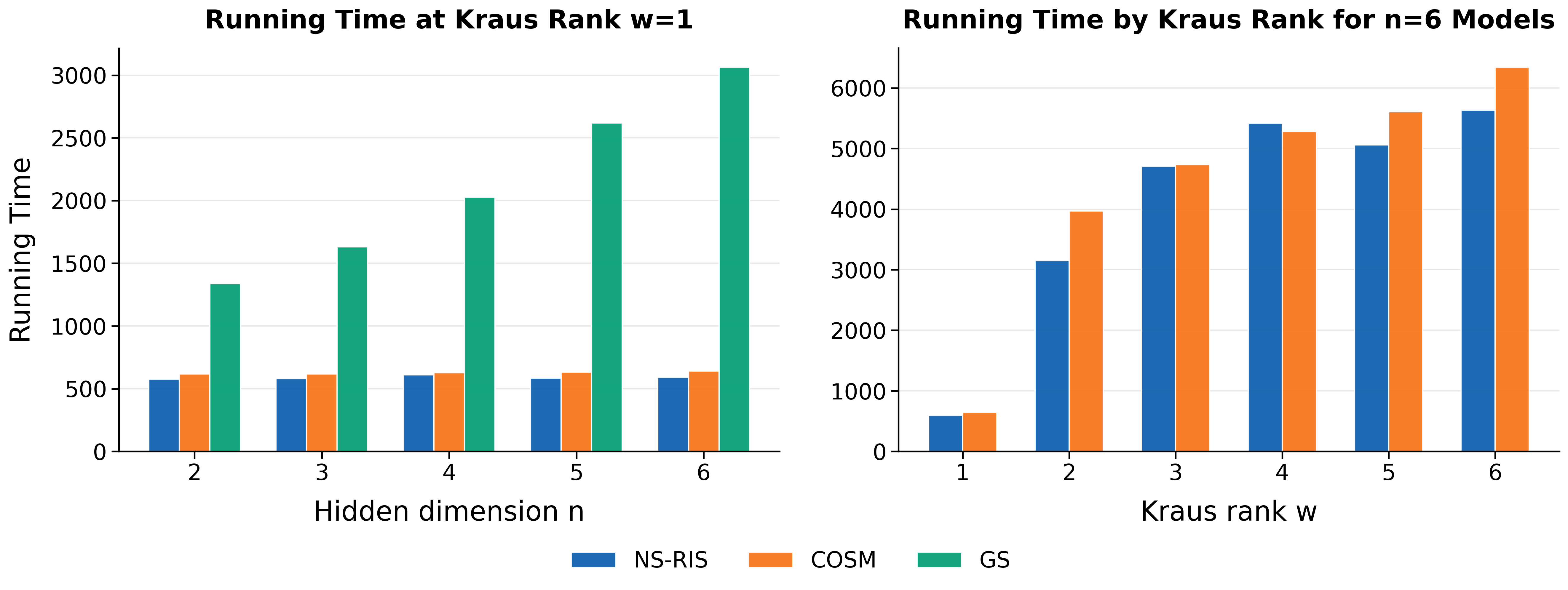}}
    \caption{
    	Running time comparison for HQMM learning methods under varying hidden dimensions and Kraus ranks.
    	Left: Running time with Kraus rank fixed at $w=1$ while varying the hidden dimension $n$. NS-RIS and COSM maintain relatively stable runtimes across hidden dimensions, whereas GS becomes substantially more expensive as $n$ increases.
    	Right: Running time for models with hidden dimension $n=6$ while varying the Kraus rank $w$. Both NS-RIS and COSM require additional computation for larger Kraus ranks, but NS-RIS remains competitive with COSM across all tested settings.
    }
    \label{fig:synth-hmm-runtime}
\end{figure}

Across the evaluated synthetic benchmark settings, NS-RIS consistently outperforms COSM in terms of test performance. To quantify the relative gain, we compute the percentage improvement using $$\mathrm{Improvement}(\%) = \frac{\mathrm{Metric}*{\text{NS-RIS}} - \mathrm{Metric}*{\text{COSM}}}{\mathrm{Metric}_{\text{COSM}}} \times 100.$$ 
Averaged across all experimental configurations, NS-RIS achieves approximately $38.5\%$ relative improvement over COSM. 
The largest improvement is observed for $\texttt{scenario\_id}=4$, where the averaged test metric increases from $0.1736$ for COSM to $0.2615$ for NS-RIS. 
Substituting these values into the relative-improvement formula gives $\frac{0.2615 - 0.1736}{0.1736} \times 100 = 50.6\%$, demonstrating that NS-RIS attains over $50\%$ relative improvement in the best-performing configuration. These results indicate that the proposed NS-RIS optimization framework provides substantial and consistent performance gains over existing geometric optimization approaches for synthetic HMM learning tasks.

It is worth noting that only approximately $10\%$ of the experimental settings involved explicit hyperparameter tuning. Specifically, hyperparameter tuning was conducted only for the hidden-state configuration with $n=4$, while for all remaining experiments we directly adopted the best-performing hyperparameter settings reported for COSM in~\cite{adhikary2020expressiveness}. Despite this minimal tuning effort, NS-RIS consistently achieved superior predictive performance and competitive runtime efficiency across the evaluated benchmarks. This result highlights the robustness and strong generalization capability of the proposed NS-RIS optimization framework, suggesting that it is substantially less sensitive to hyperparameter selection while still converging to high-quality solutions. The same phenomenon was observed in the subsequent HQMM benchmark experiments.

\subsection{Synthetic HQMM Test Performance}
\label{sec:synthetic-hqmm-results}
The synthetic HQMM benchmark follows the experimental setting introduced by \cite{adhikary2020expressiveness}, where the data are generated from a synthetic HQMM inspired by the Stern--Gerlach experiment in quantum mechanics. The benchmark uses a configuration with hidden dimension $n=2$, output alphabet size $s=6$, and Kraus rank $w=1$, which was specifically designed to highlight the expressive advantages of HQMMs over classical HMMs. In the original study, the authors demonstrated that a significantly larger classical HMM is required to match the representational capacity of a small HQMM, emphasizing the richer expressiveness of quantum-inspired latent-state models.  Figure~\ref{fig:synth-hqmm-runtime} presents the averaged training, validation, and test metrics together with standard-deviation error bars across repeated runs, as well as the running-time comparison between NS-RIS, GS, COSM, and EM. The results show that NS-RIS consistently achieves the strongest predictive performance across all evaluation metrics while maintaining competitive computational efficiency. In particular, NS-RIS improves the test metric over COSM by approximately $18.9\%$, computed as $\frac{0.1536 - 0.1292}{0.1292}\times100$, while simultaneously reducing runtime by approximately $12.0\%$ relative to COSM, computed as $\frac{643.23 - 566.19}{643.23}\times100$. These results demonstrate that the proposed NS-RIS can improve both solution quality and practical scalability for HQMM learning. It is worth mentioning that no additional hyperparameter tuning was conducted in our experiments. All results were generated using the same hyperparameter configuration reported as optimal for the COSM method in~\cite{adhikary2020expressiveness}, and this identical configuration was applied uniformly across NS-RIS, GS, COSM, and EM to ensure a fair and controlled comparison between optimization methods.

\begin{figure}[t!]
	\centering
	\includegraphics[width=1\linewidth]{\detokenize{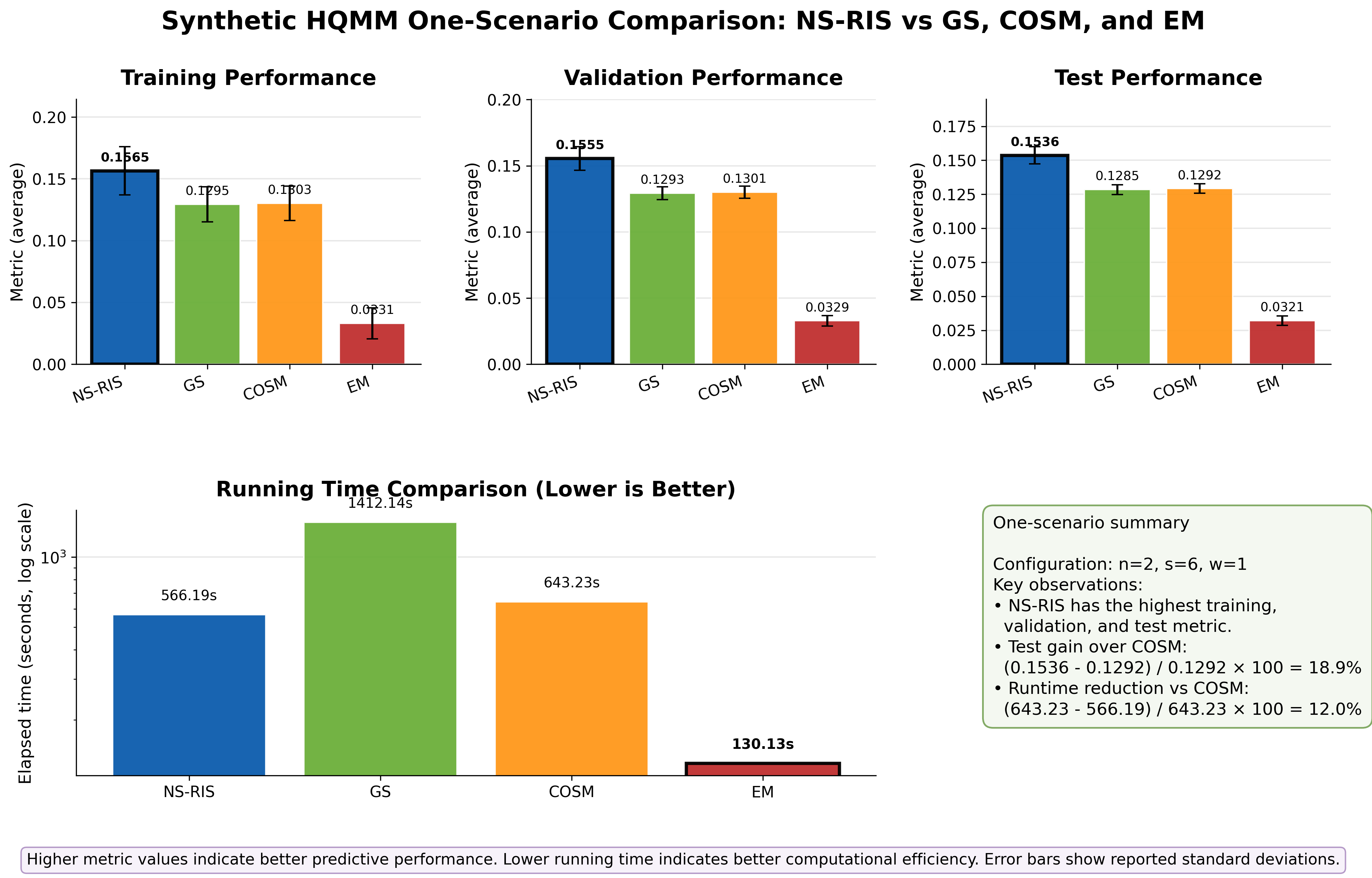}}
\caption{Comparison of NS-RIS, GS, COSM, and EM on the synthetic HQMM benchmark with configuration $(n=2, s=6, w=1)$. The top row reports the averaged training, validation, and test metrics with standard-deviation error bars across repeated runs, while the bottom row compares elapsed running times on a logarithmic scale. NS-RIS consistently achieves the best predictive performance across all evaluation metrics while also providing competitive computational efficiency.}
	\label{fig:synth-hqmm-runtime}
\end{figure}

\section{Empirical Analysis}
\label{sec:empirical-analysis}
This section evaluates NS-RIS on the real Splice benchmark. We first describe
the dataset and preprocessing in Subsection~\ref{sec:splice-dataset}, then
report the classification and runtime results in
Subsection~\ref{sec:splice-results}, and finally discuss the biological
explanation for using higher-dimensional latent spaces in
Subsection~\ref{sec:scientific-validation}.

\subsection{The Splice Dataset}
\label{sec:splice-dataset}
The splice dataset is a widely used benchmark for sequence classification and splice-junction prediction in computational biology \citep{Towell1991,Dheeru2017} that is publicly available at UCI machine learning repository. The dataset consists of DNA sequences of fixed length $60$, where each position corresponds to one of the four nucleobases: Adenine (A), Cytosine (C), Guanine (G), or Thymine (T). Biologically, DNA sequences contain protein-coding regions known as exons, interspersed with non-coding regions called introns. Correct identification of exon--intron boundaries is an important problem in gene prediction and genomic analysis. The classification task is to determine whether a sequence contains an exon--intron (EI) splice site, an intron--exon (IE) splice site, or neither (N). The dataset contains $762$ EI examples, $765$ IE examples, and $1648$ negative examples. Although the original dataset includes several ambiguous nucleotide symbols beyond ${A,C,G,T}$, we remove sequences containing ambiguous characters during preprocessing in order to ensure a consistent discrete input alphabet for training and evaluation.

\subsection{Splice Classification Results}
\label{sec:splice-results}
Figure~\ref{fig:splice_combined_mean_error_bars} compares mean classification
error for COSM and NS-RIS over latent dimensions $n\in\{2,4,6,8\}$ and Kraus
ranks $w\in\{1,2,4,6\}$. The results show that increasing the latent
dimension is beneficial for NS-RIS: for $n=6$ and $n=8$, NS-RIS consistently
achieves lower error than the corresponding COSM configurations and falls
well below the EM baseline, with the best mean error reaching approximately
$0.340$ for $n=6$ and $0.330$ for $n=8$. More specifically, at $n=6$,
the best COSM error is $0.422$, whereas NS-RIS reaches $0.340$, a relative
reduction of about $19.4\%$; this is also substantially below the EM baseline
error $0.428$. At $n=8$, the best COSM error is $0.402$, while NS-RIS
achieves $0.330$, a relative reduction of about $17.9\%$ and an improvement
over the EM baseline error $0.421$. For smaller latent dimensions ($n=2$
and $n=4$), the performance is more mixed, indicating that the advantage of
NS-RIS becomes most visible when the HQMM has enough latent capacity to model
the splice-junction structure.
\begin{figure}[t!]
	\centering
	\includegraphics[width=1\linewidth]{\detokenize{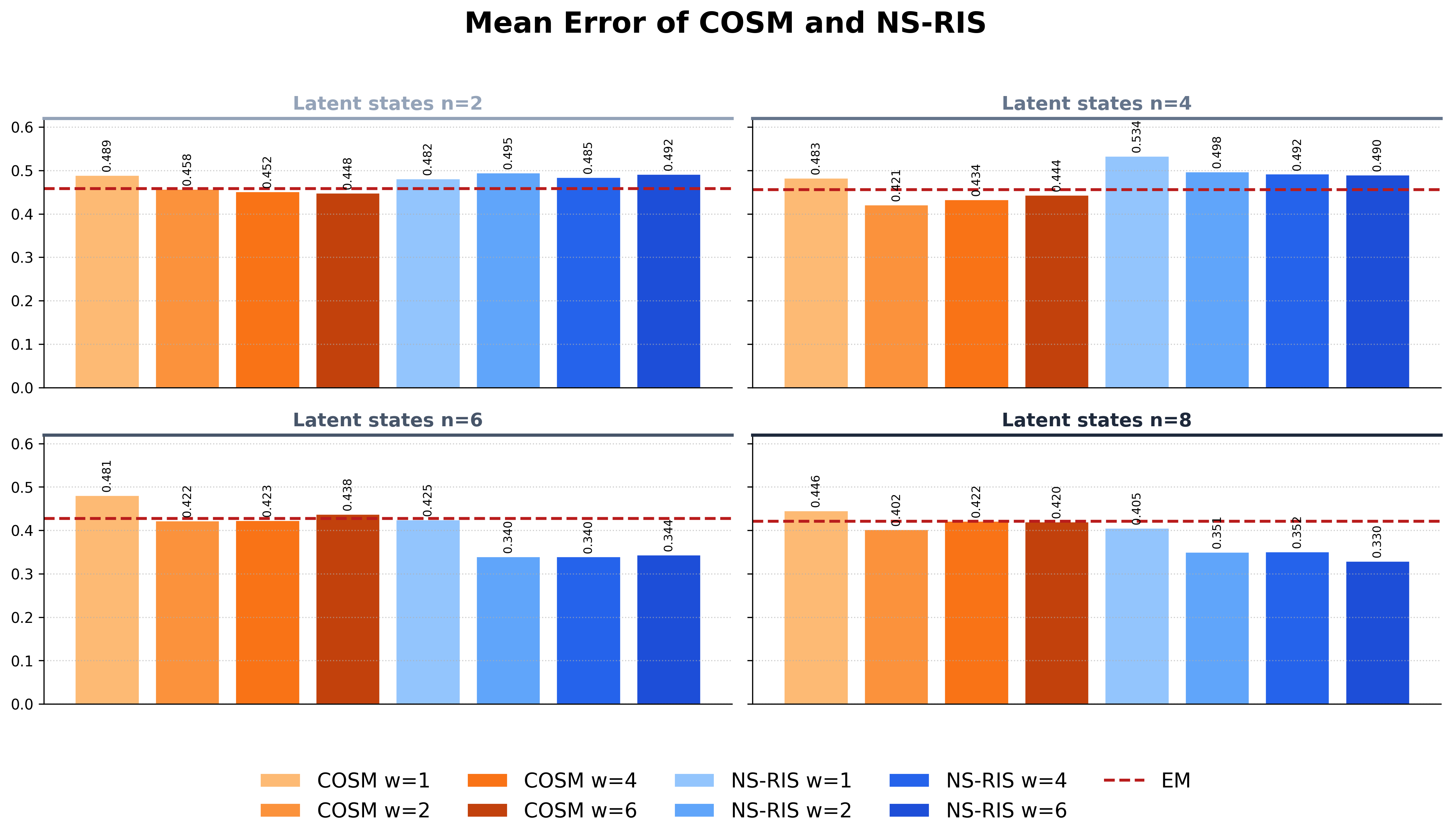}}
	\caption{Mean error comparison of COSM and NS-RIS on the Splice dataset for latent state sizes $n \in \{2,4,6,8\}$. Within each panel, bars report the mean error for window sizes $w \in \{1,2,4,6\}$, with darker shades indicating larger $w$. The dashed horizontal line shows the EM baseline for the same latent state size: $0.458$ for $n=2$, $0.456$ for $n=4$, $0.428$ for $n=6$, and $0.421$ for $n=8$.}
	\label{fig:splice_combined_mean_error_bars}
\end{figure}

Figure~\ref{fig:three_metrics} gives a class-wise comparison between COSM
and NS-RIS using the EI, IE, and negative-class error metrics. The nested
pies show that NS-RIS improves all three class-specific errors in the
larger-latent-state settings $(n,w)=(6,4)$ and $(8,4)$, with particularly
large reductions in the negative-class error. For $(n,w)=(6,4)$, NS-RIS
reduces the EI error from $0.422160$ to $0.349046$ ($17.32\%$ lower), the IE
error from $0.357189$ to $0.328432$ ($8.05\%$ lower), and the negative-class
error from $0.544604$ to $0.409422$ ($24.82\%$ lower). For $(n,w)=(8,4)$,
NS-RIS reduces the EI error from $0.389734$ to $0.340166$ ($12.72\%$ lower),
the IE error from $0.346732$ to $0.344771$ ($0.57\%$ lower), and the
negative-class error from $0.530787$ to $0.393054$ ($25.95\%$ lower). In the
smaller settings, NS-RIS is not uniformly better across all classes: for
$(2,4)$ it improves IE error from $0.388889$ to $0.311438$ but increases EI
and negative-class errors, while for $(4,4)$ all three errors are higher than
COSM. This class-wise view reinforces that the strongest advantage appears in
the larger latent-state regimes, where NS-RIS improves not only the average
error but also all three biologically meaningful classification categories.
\begin{figure}[t!]
	\centering
	\includegraphics[width=0.98\linewidth]{\detokenize{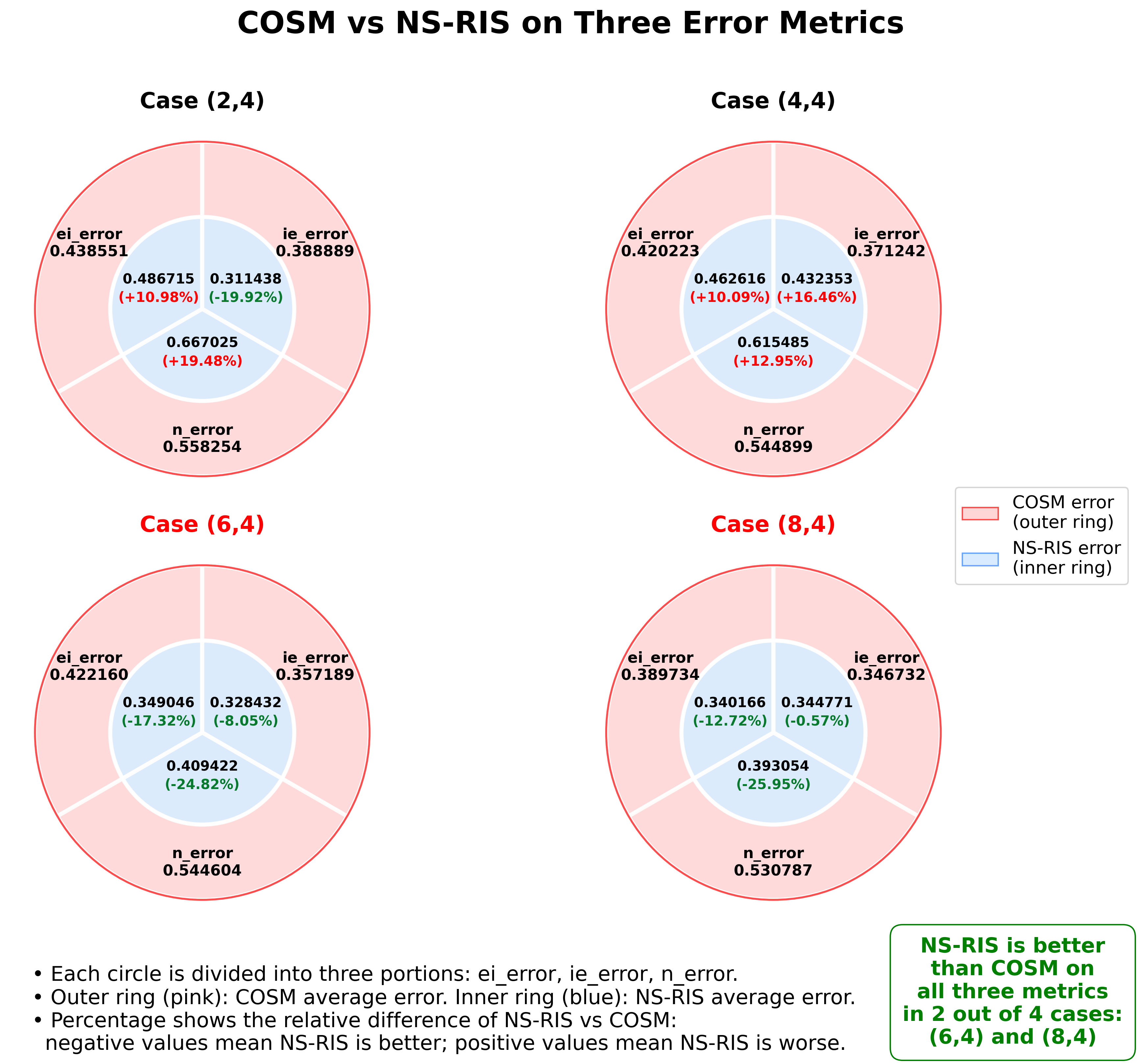}}
	\caption{Nested pie comparison of COSM and NS-RIS errors across three metrics.}
	\label{fig:three_metrics}
\end{figure}

Figure~\ref{fig:splice_runtime_nsris_cosm_bars} compares the running time of
NS-RIS and COSM across the tested Splice configurations. The upper subfigure
shows that COSM generally becomes slower as the Kraus rank $w$ increases,
although its ordering is not perfectly monotone across all latent dimensions.
The lower subfigure shows a clearer monotone pattern for NS-RIS: the running
time grows first with the Kraus rank $w$ and then with the combined scale
$n w$, which more fully reflects the dimension of the stacked Stiefel
parameter. The runtime advantage of NS-RIS is especially clear for several
larger Kraus-rank settings where COSM requires substantially more time. Thus the
improved classification accuracy is not obtained by paying a larger
computational cost; rather, the Newton--Schulz retraction strategy improves
predictive performance while preserving practical scalability.
\begin{figure}[t!]
	\centering
	\includegraphics[width=0.95\linewidth]{\detokenize{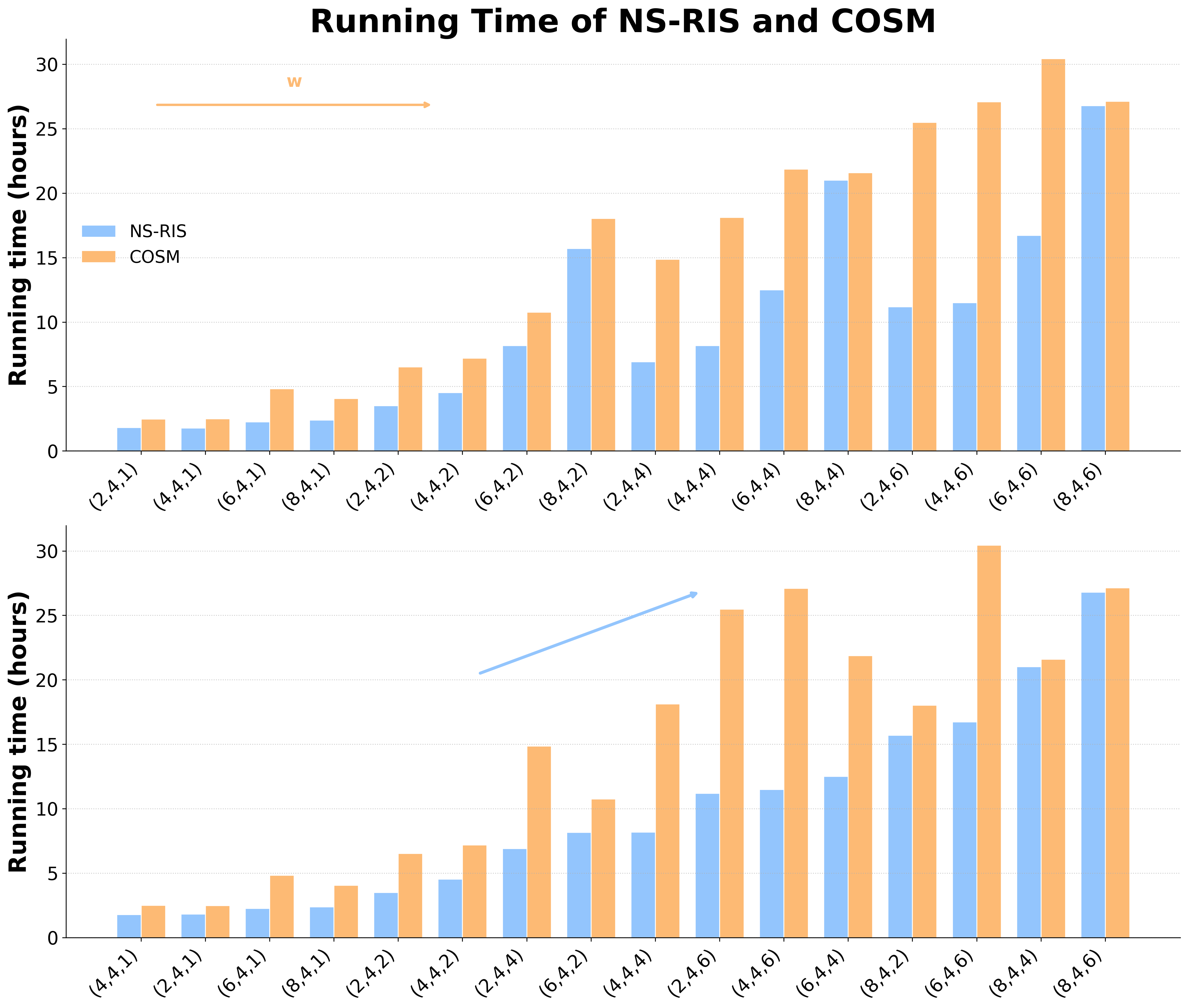}}
	\caption{Running-time comparison of NS-RIS and COSM on the Splice dataset across latent dimensions and Kraus ranks.}
	\label{fig:splice_runtime_nsris_cosm_bars}
\end{figure}

\subsection{Scientific Validation}
\label{sec:scientific-validation}
Recent advances in quantum computing and quantum machine learning have stimulated growing interest in applying quantum-inspired methods to genomic and DNA sequence analysis. Several studies have explored quantum algorithms for biological sequence comparison, DNA sequence alignment, genome assembly, and genomic pattern recognition \citep{Outeiral2020,KosogluKind2023,Varsamis2023}. Broader surveys of quantum computing in biology likewise identify genetics, molecular biology, drug design, and bioinformatics as application areas where quantum representations may become useful \citep{ghamsari2025quantum}. In particular, quantum-based approaches have been proposed to model high-dimensional sequence dependencies and complex biological correlations that are difficult to capture using classical probabilistic models alone \citep{nalkecz2024quantum}. Recent work has also demonstrated the feasibility of encoding and analyzing genomic sequences on real quantum hardware, highlighting the potential of quantum representations for large-scale genomic inference tasks. These developments suggest that richer latent-state frameworks, such as HQMMs, may provide advantages over classical HMMs in modeling splice-junction sequences, especially when long-range dependencies and higher-order contextual interactions are present in biological sequence data.

Although the observed alphabet in the splice dataset consists of only four nucleotides ${A,C,G,T}$, there is strong evidence from computational biology and sequence modeling that the latent state space should generally be substantially larger than four. In HMMs and related state-space approaches for genomic sequences, the latent variables are not intended to represent the nucleotides themselves, but rather higher-order biological contexts such as splice donor/acceptor motifs, exon and intron regions, insertion/deletion dynamics, codon structure, and long-range dependencies. Classical profile-HMM formulations for biological sequence analysis therefore employ many more hidden states than observable symbols, including match, insertion, and deletion states at different sequence positions \citep{Krogh1994,Eddy1998}. In splice-site prediction specifically, different latent states are used to model biologically distinct regimes surrounding exon--intron boundaries, even when the emissions remain limited to four nucleobases \citep{Burge1998}. Consequently, restricting the latent dimensionality to four would force the model to conflate multiple heterogeneous biological mechanisms into the same representation, reducing its ability to capture contextual sequence structure.

More recent representation-learning approaches for DNA sequence modeling further support the use of latent spaces whose dimensionality exceeds the cardinality of the nucleotide alphabet. Deep latent-variable and sequence-embedding models trained on splice-site datasets consistently learn distributed representations that encode motif composition, positional dependencies, and regulatory patterns that cannot be represented using only four discrete latent categories \citep{Agarwal2019,DeepDeCode2023}. 
From an information-theoretic perspective, the observable alphabet size only constrains the emission space, whereas the latent state dimension reflects the complexity of the underlying generative process. Since splice recognition depends on combinatorial sequence motifs and contextual interactions extending across many nucleotide positions, using a latent dimension larger than four is both biologically motivated and empirically supported in prior literature.

\section{Conclusion}
\label{sec:conclusion}
This paper introduces NS-RIS for scalable learning HQMM parameters on the complex Stiefel manifold. NS-RIS enables HQMM training to be both geometrically faithful and computationally efficient. Specifically, the first Newton--Schulz iteration approximates the operator-norm steepest descent direction via the polar factor of the momentum, while the second Newton--Schulz iteration restores the Stiefel-manifold feasibility required by the trace-preserving Kraus constraint.

We establish the theoretical convergence properties of NS-RIS under standard assumptions on smoothness, stochastic gradients, and finite Newton--Schulz accuracy. In particular, we derive an explicit finite-time stationarity bound that separates the contributions of the telescoping descent term, smoothness error, stochastic momentum-tracking error, and feasibility residuals, thereby clarifying how each component influences convergence. The analysis further shows that sufficiently accurate Newton--Schulz retraction errors contribute only lower-order terms, implying that the decomposition-free implementation retains the convergence behavior of stochastic Riemannian gradient descent up to the inherent stochastic noise floor.

Extensive experiments on both synthetic and real-world sequence datasets demonstrate the effectiveness of NS-RIS. On synthetic HMM and HQMM benchmarks, NS-RIS consistently achieves stronger predictive performance than existing HQMM training methods while remaining computationally competitive. On the Splice benchmark, NS-RIS is particularly effective when the latent dimension is sufficiently large to capture the biological complexity of splice-junction sequences, reducing both the overall classification error and class-specific errors compared with the current state-of-the-art method in higher-dimensional settings.

Importantly, NS-RIS provides the
first evidence in these benchmarks that an HQMM can significantly outperform the
HMM baseline on data that are not generated by a quantum model. 
This is a remarkable step beyond the theoretical fact that HQMMs generalize
HMMs: it shows that, when equipped with scalable Stiefel-manifold inference,
HQMMs can deliver practical gains on ordinary sequence data. These results
therefore position HQMMs as viable replacements or extensions of HMMs across
the broad scientific sequence-modeling settings where HMMs have long been
standard tools and richer latent dynamics are needed.

\appendix
\section{Proofs}
\label{sec:proofs}

This appendix gives the proofs of the mathematical statements in
Subsection~\ref{subsec:nsris-guarantees}. Subsection~\ref{app:proof-proposition}
proves Proposition~\ref{prop:operator-steepest-retraction}, and
Subsection~\ref{app:proof-theorem} proves
Theorem~\ref{thm:nsris-convergence} together with the auxiliary lemmas used
in the convergence analysis.

\subsection{Proof of Proposition~\ref{prop:operator-steepest-retraction}}
\label{app:proof-proposition}
Lemma \ref{lem:stiefel-first-order-expansion} below gives the first-order expansion on the Stiefel manifold.

\begin{lemma}
\label{lem:stiefel-first-order-expansion}
Let $F$ be continuously differentiable in a neighborhood of
$\St(p,n)$, and let $\Gamma\in\St(p,n)$. If
$\gamma:(-\epsilon,\epsilon)\to\St(p,n)$ is a differentiable curve satisfying
$\gamma(0)=\Gamma$ and $\gamma'(0)=\xi\in T_\Gamma\St(p,n)$, then
\[
    F(\gamma(t))
    =
    F(\Gamma)
    +
    t\left\langle \operatorname{grad}F(\Gamma),\xi\right\rangle_F
    +
    o(t).
\]
If the gradient of $F$ is locally Lipschitz, the remainder is
$O(t^2)$; in particular, this quadratic remainder holds under
Assumption~\ref{assum:retraction-smoothness}. In particular, for a straight perturbation
$\Gamma+t\Delta$, the same first-order expression with
$\operatorname{grad}F(\Gamma)$ is valid when
$\Delta\in T_\Gamma\St(p,n)$. For a general ambient perturbation
$\Delta\in\mathbb{C}^{p\times n}$, the linear term is instead
$\langle\nabla F(\Gamma),\Delta\rangle_F$.
\end{lemma}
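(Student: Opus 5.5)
Prove the lemma by reducing it to the one-variable chain rule applied to $\phi(t)=F(\gamma(t))$, then use the tangent-space structure to swap the Euclidean gradient for the Riemannian one.

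\textbf{Step 1: chain rule.} Since $F$ is $C^1$ on a neighborhood of $\St(p,n)$ and $\gamma$ is differentiable with values in $\St(p,n)$, the composition $\phi$ is differentiable at $0$. In the real Frobenius inner product,
\[
\phi'(0)=\langle\nabla F(\Gamma),\gamma'(0)\rangle_F=\langle\nabla F(\Gamma),\xi\rangle_F .
\]
The definition of the derivative then gives $\phi(t)=\phi(0)+t\phi'(0)+o(t)$.

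\textbf{Step 2: replace $\nabla F$ by $\operatorname{grad}F$.} Show that $\Pi_\Gamma$ in \eqref{eqn:projection} is self-adjoint and idempotent for $\langle\cdot,\cdot\rangle_F$ and fixes $T_\Gamma\St(p,n)$. For $\xi$ tangent, write $\Gamma^\dagger\xi=-\xi^\dagger\Gamma$. Then
\[
\langle\Gamma S,\xi\rangle_F=\operatorname{Re}\operatorname{tr}(S\Gamma^\dagger\xi)=0
\]
for every Hermitian $S$, because $\Gamma^\dagger\xi$ is skew-Hermitian and the trace of a Hermitian matrix times a skew-Hermitian matrix is purely imaginary. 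Taking $S=\tfrac12(G^\dagger\Gamma+\Gamma^\dagger G)$ gives
\[
\langle\nabla F(\Gamma)-\operatorname{grad}F(\Gamma),\xi\rangle_F=0,
\]
so the linear term equals $t\langle\operatorname{grad}F(\Gamma),\xi\rangle_F$.

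\textbf{Step 3: quadratic remainder.} This is the main obstacle. Under a local Lipschitz gradient, write
\[
\phi(t)-\phi(0)-t\phi'(0)=\int_0^t\bigl(\langle\nabla F(\gamma(s)),\gamma'(s)\rangle_F-\langle\nabla F(\Gamma),\xi\rangle_F\bigr)\,ds .
\]
Split the integrand into two differences:
\[
\langle\nabla F(\gamma(s))-\nabla F(\Gamma),\gamma'(s)\rangle_F+\langle\nabla F(\Gamma),\gamma'(s)-\xi\rangle_F .
\]
The first piece is $O(s)$ by the Lipschitz bound together with $\|\gamma(s)-\Gamma\|=O(s)$; for Assumption~\ref{assum:retraction-smoothness}, use Hölder with the $\|\cdot\|_1$/$\|\cdot\|_\infty$ duality. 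The second piece is $O(s)$ only if $\gamma'$ is Lipschitz near $0$, so assume $\gamma$ is $C^{1,1}$ (true for standard retraction curves). If the curve is only differentiable, the remainder cannot be upgraded, so this regularity hypothesis has to be stated explicitly. Integrating gives $O(t^2)$.

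\textbf{Step 4: straight-line and ambient perturbations.} For $\Gamma+t\Delta$, which leaves the manifold, apply Step 1 directly in the ambient neighborhood. This gives linear term $\langle\nabla F(\Gamma),\Delta\rangle_F$ for general $\Delta$. When $\Delta\in T_\Gamma\St(p,n)$, Step 2 converts it to $\langle\operatorname{grad}F(\Gamma),\Delta\rangle_F$. The $O(t^2)$ remainder follows as in Step 3, where the second piece now vanishes identically because the velocity is constant.
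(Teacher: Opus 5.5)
Your Steps 1, 2 and 4 follow the paper's argument. You apply the chain rule to $t\mapsto F(\gamma(t))$ and use that $\nabla F(\Gamma)-\operatorname{grad}F(\Gamma)=\Gamma S_\Gamma$ is orthogonal to $T_\Gamma\St(p,n)$. The paper only cites the orthogonal-projection property here and does the Hermitian/skew-Hermitian trace computation later, in the proof of Lemma~\ref{thm:operator-nuclear-descent}; you do it in place, which makes the lemma self-contained.

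The genuine difference is Step 3. The paper just invokes ``the usual Taylor estimate,'' whereas your integral split shows what that estimate actually needs. Your caveat is correct: the $O(t^2)$ claim is false for merely differentiable, even $C^1$, curves. For example, on $\St(2,1)$ take $\Gamma=(1,0)^T$ and $\gamma(t)=(\cos\theta(t),\sin\theta(t))^T$ with $\theta(t)=t+|t|^{3/2}$, which is a $C^1$ curve on the manifold. Take $F(\Gamma)=\operatorname{Re}\Gamma_2$; its gradient is constant, so Assumption~\ref{assum:retraction-smoothness} holds trivially. Then $\gamma'(0)=(0,1)^T=\operatorname{grad}F(\Gamma)$, yet $F(\gamma(t))-F(\Gamma)-t=\sin(t+|t|^{3/2})-t\sim|t|^{3/2}$, which is not $O(t^2)$. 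So your version, with $\gamma$ assumed $C^{1,1}$ near $0$, is the correct form of the quadratic-remainder statement, while the paper's shorter proof overclaims.

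One small caution for Step 4: the $O(t^2)$ bound along $\Gamma+t\Delta$ applies the Lipschitz estimate at off-manifold points. Assumption~\ref{assum:retraction-smoothness} is stated only on $\St(p,n)$ and does not supply this, so you need the ambient local-Lipschitz hypothesis there. This does not affect the lemma's straight-perturbation claims, which concern only the linear term.
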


\begin{proof}
By the chain rule and the definition of the Euclidean gradient,
\[
    \frac{d}{dt}F(\gamma(t))\bigg|_{t=0}
    =
    \left\langle \nabla F(\Gamma),\xi\right\rangle_F .
\]
Since $\operatorname{grad}F(\Gamma)=\Pi_\Gamma(\nabla F(\Gamma))$ is the
orthogonal projection of $\nabla F(\Gamma)$ onto
$T_\Gamma\St(p,n)$, the residual
$\nabla F(\Gamma)-\operatorname{grad}F(\Gamma)$ is orthogonal to every
tangent vector. Because $\xi\in T_\Gamma\St(p,n)$,
\[
    \left\langle \nabla F(\Gamma),\xi\right\rangle_F
    =
    \left\langle \operatorname{grad}F(\Gamma),\xi\right\rangle_F .
\]
This proves the stated first-order expansion. The first-order
$o(t)$ term only uses differentiability. If the gradient is locally
Lipschitz, the usual Taylor estimate gives a quadratic remainder; this
condition is supplied by Assumption~\ref{assum:retraction-smoothness} in the
operator--nuclear geometry used here. The final claim follows by applying
the same calculation to the straight curve
$t\mapsto\Gamma+t\Delta$; when $\Delta$ is not tangent, the normal component
of $\Delta$ need not be orthogonal to $\nabla F(\Gamma)$, so the Euclidean
gradient gives the correct ambient first variation.
\end{proof}

\begin{proof}[Proof of Proposition~\ref{prop:operator-steepest-retraction}]
By spectral--nuclear duality, for every feasible $\Delta$,
\[
    \left\langle M_{k+1},\Delta\right\rangle_F
    \geq
    -\|M_{k+1}\|_1\|\Delta\|_\infty
    \geq
    -\|M_{k+1}\|_1 .
\]
Taking $\Delta^\star=-UV^\dagger$ gives
$\|\Delta^\star\|_\infty=1$ and
\[
    \left\langle M_{k+1},\Delta^\star\right\rangle_F
    =
    -\left\langle U\Sigma V^\dagger,UV^\dagger\right\rangle_F
    =
    -\operatorname{tr}(\Sigma)
    =
    -\|M_{k+1}\|_1 .
\]
Thus $-\Polar(M_{k+1})$ solves the constrained linearized decrease problem.
For the projection statement, let
$Y=\Gamma_{k+\frac12}$ and
$R(Y)=Y(Y^\dagger Y)^{-1/2}$. For $Q\in\St(p,n)$,
\[
    \|Y-Q\|_F^2
    =
    \operatorname{tr}(Y^\dagger Y)
    +
    n
    -
    2\operatorname{Re}\operatorname{tr}(Q^\dagger Y).
\]
The first two terms do not depend on $Q$, so the projection problem is
equivalent to maximizing
$\operatorname{Re}\operatorname{tr}(Q^\dagger Y)$. To see why the polar
factor attains this maximum, let $Y=A\Sigma B^\dagger$ be a thin singular
value decomposition. Then
$Y(Y^\dagger Y)^{-1/2}=AB^\dagger$. For any $Q\in\St(p,n)$,
\[
    \operatorname{Re}\operatorname{tr}(Q^\dagger Y)
    =
    \operatorname{Re}\operatorname{tr}(B^\dagger Q^\dagger A\Sigma)
    \leq
    \sum_{j=1}^n \sigma_j(Y),
\]
because $B^\dagger Q^\dagger A$ is a contraction and the trace is maximized
when its diagonal entries are all equal to one. This upper bound is achieved
by choosing $Q=AB^\dagger=R(Y)$, since then
$\operatorname{Re}\operatorname{tr}(Q^\dagger Y)=\operatorname{tr}(\Sigma)$.
Thus the polar factor $R(Y)=Y(Y^\dagger Y)^{-1/2}$ is the closest Stiefel
matrix to $Y$ in Frobenius norm. Moreover,
\[
    R(Y)^\dagger R(Y)
    =
    (Y^\dagger Y)^{-1/2}Y^\dagger Y(Y^\dagger Y)^{-1/2}
    =
    \mathbb{I}_n,
\]
which is the definition of Stiefel manifold.
\end{proof}

\subsection{Proof of Theorem~\ref{thm:nsris-convergence}}
\label{app:proof-theorem}

The following four lemmas, Lemmas~\ref{thm:operator-nuclear-descent}--\ref{lem:momentum-tracking},
will be used to prove Theorem~\ref{thm:nsris-convergence}.

\begin{lemma}
	\label{thm:operator-nuclear-descent}
	Suppose Assumption~\ref{assum:retraction-smoothness} holds. For $\Gamma\in\St(p,n)$, define
	\[
	S_\Gamma
	=
	\frac{\Gamma^\dagger\nabla F(\Gamma)
		+\nabla F(\Gamma)^\dagger\Gamma}{2}.
	\]
	Then the Riemannian gradient satisfies
	\[
	\operatorname{grad}F(\Gamma)
	=
	\nabla F(\Gamma)-\Gamma S_\Gamma .
	\]
	Moreover, for any $\Gamma,\Gamma'\in\St(p,n)$,
	\[
	\begin{aligned}
		F(\Gamma')
		\leq\;&
		F(\Gamma)
		+
		\left\langle
		\operatorname{grad}F(\Gamma),\Gamma'-\Gamma
		\right\rangle_F
		+
		\frac{L}{2}\|\Gamma'-\Gamma\|_\infty^2 -
		\frac{1}{2}
		\left\langle
		S_\Gamma,
		(\Gamma'-\Gamma)^\dagger(\Gamma'-\Gamma)
		\right\rangle_F .
	\end{aligned}
	\]
	Consequently,
	\[
	F(\Gamma')
	\leq
	F(\Gamma)
	+
	\left\langle
	\operatorname{grad}F(\Gamma),\Gamma'-\Gamma
	\right\rangle_F
	+
	\frac{L+\|S_\Gamma\|_1}{2}
	\|\Gamma'-\Gamma\|_\infty^2 .
	\]
	Letting
	$G_1\geq \sup_{\Gamma\in\St(p,n)}\|\nabla F(\Gamma)\|_1$, then
	\[
	F(\Gamma')
	\leq
	F(\Gamma)
	+
	\left\langle
	\operatorname{grad}F(\Gamma),\Gamma'-\Gamma
	\right\rangle_F
	+
	\frac{L+G_1}{2}
	\|\Gamma'-\Gamma\|_\infty^2 .
	\]
\end{lemma}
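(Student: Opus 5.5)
The first claim, the formula for the Riemannian gradient, follows by direct substitution. Writing $G=\nabla F(\Gamma)$, the projection \eqref{eqn:projection} reads $\Pi_\Gamma(G)=G-\tfrac12\Gamma(G^\dagger\Gamma+\Gamma^\dagger G)$. The matrix $G^\dagger\Gamma+\Gamma^\dagger G$ is symmetric, and its Hermitian structure is exactly $2S_\Gamma$, so $\operatorname{grad}F(\Gamma)=G-\Gamma S_\Gamma$ with no further work.

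For the main inequality, the plan is to start from the ambient descent lemma induced by Assumption~\ref{assum:retraction-smoothness}. Set $\Delta=\Gamma'-\Gamma$ and write
\[
F(\Gamma')-F(\Gamma)-\langle G,\Delta\rangle_F=\int_0^1\langle\nabla F(\Gamma+t\Delta)-G,\Delta\rangle_F\,dt .
\]
Bound the integrand by $\|\nabla F(\Gamma+t\Delta)-G\|_1\|\Delta\|_\infty\le Lt\|\Delta\|_\infty^2$, using nuclear/operator duality. This gives $F(\Gamma')\le F(\Gamma)+\langle G,\Delta\rangle_F+\tfrac L2\|\Delta\|_\infty^2$.

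The main obstacle is that the segment $\Gamma+t\Delta$ leaves $\St(p,n)$, while the assumption is stated only on the manifold. I would handle this by reading Assumption~\ref{assum:retraction-smoothness} as holding on the convex hull of $\St(p,n)$, i.e. the operator-norm unit ball, where $F$ is defined as a smooth extension. I will note this interpretation explicitly. If the assumption is taken literally, I would instead insert the manifold case directly, with the same constant.

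Next, replace $G$ by $\operatorname{grad}F(\Gamma)+\Gamma S_\Gamma$, which gives $\langle G,\Delta\rangle_F=\langle\operatorname{grad}F(\Gamma),\Delta\rangle_F+\langle\Gamma S_\Gamma,\Delta\rangle_F$. The key identity comes from feasibility of both points. Since $\Gamma'^\dagger\Gamma'=\Gamma^\dagger\Gamma=\mathbb{I}_n$, expanding $(\Gamma+\Delta)^\dagger(\Gamma+\Delta)=\mathbb{I}_n$ yields $\Gamma^\dagger\Delta+\Delta^\dagger\Gamma=-\Delta^\dagger\Delta$. Because $S_\Gamma$ is Hermitian,
\[
\langle\Gamma S_\Gamma,\Delta\rangle_F=\operatorname{Re}\operatorname{tr}(S_\Gamma\Gamma^\dagger\Delta)=\tfrac12\operatorname{tr}\bigl(S_\Gamma(\Gamma^\dagger\Delta+\Delta^\dagger\Gamma)\bigr)=-\tfrac12\langle S_\Gamma,\Delta^\dagger\Delta\rangle_F .
\]
Substituting this gives the stated inequality with the negative curvature term.

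For the consequences, apply duality again to get $|\langle S_\Gamma,\Delta^\dagger\Delta\rangle_F|\le\|S_\Gamma\|_1\|\Delta^\dagger\Delta\|_\infty=\|S_\Gamma\|_1\|\Delta\|_\infty^2$, which yields the $(L+\|S_\Gamma\|_1)/2$ bound. Finally, use the triangle inequality and submultiplicativity with $\|\Gamma\|_\infty=1$:
\[
\|S_\Gamma\|_1\le\tfrac12\bigl(\|\Gamma^\dagger G\|_1+\|G^\dagger\Gamma\|_1\bigr)\le\|G\|_1\le G_1 .
\]
This gives the last display.
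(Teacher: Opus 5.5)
Your proposal is correct and is essentially the paper's proof: the ambient descent inequality along $\Gamma+t\Delta$ via nuclear/operator duality, the split $\nabla F(\Gamma)=\operatorname{grad}F(\Gamma)+\Gamma S_\Gamma$, the feasibility identity $\Gamma^\dagger\Delta+\Delta^\dagger\Gamma=-\Delta^\dagger\Delta$ that turns the normal term into $-\tfrac12\langle S_\Gamma,\Delta^\dagger\Delta\rangle_F$, and $\|\Gamma\|_\infty=1$ to get $\|S_\Gamma\|_1\le G_1$. (The paper proves the tangent/normal decomposition explicitly rather than substituting into \eqref{eqn:projection}, which makes no difference.)

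Reading Assumption~\ref{assum:retraction-smoothness} on the operator-norm unit ball is exactly what the paper tacitly does when it applies the assumption ``along the segment''. Your fallback for the literal, manifold-only reading does not work as stated, because that reading gives no control of $\nabla F(\Gamma+t\Delta)$ at points off the manifold.
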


\begin{proof}
	Let $\Delta=\Gamma'-\Gamma$. Define
$\Gamma_t=\Gamma+t\Delta$, for $0\leq t\leq 1$. 
	By the fundamental theorem of calculus,
	\[
	F(\Gamma')-F(\Gamma)
	=
	\int_0^1
	\left\langle \nabla F(\Gamma_t),\Delta\right\rangle_F\,dt .
	\]
	Adding and subtracting $\nabla F(\Gamma)$ inside the integrand gives
	\[
	\begin{aligned}
	F(\Gamma')-F(\Gamma)
	=
	\left\langle\nabla F(\Gamma),\Delta\right\rangle_F +
	\int_0^1
	\left\langle
	\nabla F(\Gamma_t)-\nabla F(\Gamma),\Delta
	\right\rangle_F\,dt .
	\end{aligned}
	\]
	Using spectral--nuclear duality,
	\[
	\left\langle
	\nabla F(\Gamma_t)-\nabla F(\Gamma),\Delta
	\right\rangle_F
	\leq
	\|\nabla F(\Gamma_t)-\nabla F(\Gamma)\|_1
	\|\Delta\|_\infty .
	\]
	By Assumption~\ref{assum:retraction-smoothness}, applied along the segment
	joining $\Gamma$ and $\Gamma'$,
	\[
	\|\nabla F(\Gamma_t)-\nabla F(\Gamma)\|_1
	\leq
	L\|\Gamma_t-\Gamma\|_\infty
	=
	Lt\|\Delta\|_\infty .
	\]
	Therefore,
	\[
	\int_0^1
	\left\langle
	\nabla F(\Gamma_t)-\nabla F(\Gamma),\Delta
	\right\rangle_F\,dt
	\leq
	L\|\Delta\|_\infty^2\int_0^1 t\,dt
	=
	\frac{L}{2}\|\Delta\|_\infty^2 .
	\]
	Combining the preceding displays yields
	\[
	F(\Gamma')
	\leq
	F(\Gamma)
	+
	\langle  \nabla F(\Gamma),\Delta\rangle_F
	+
	\frac{L}{2}\|\Delta\|_\infty^2 .
	\]
The tangent space of the complex Stiefel manifold is
\[
T_\Gamma\St(p,n)
=
\{\xi\in\mathbb{C}^{p\times n}:
\Gamma^\dagger\xi+\xi^\dagger\Gamma=0\},
\]
which follows by differentiating the constraint
$\Gamma^\dagger\Gamma=\mathbb{I}_n$ along a differentiable curve on
$\St(p,n)$. More explicitly, if $\gamma(t)\in\St(p,n)$,
$\gamma(0)=\Gamma$, and $\gamma'(0)=\xi$, then differentiating
$\gamma(t)^\dagger\gamma(t)=\mathbb{I}_n$ at $t=0$ gives
$\xi^\dagger\Gamma+\Gamma^\dagger\xi=0$.

We next identify the tangent and normal parts of the Euclidean gradient
directly. The matrix $S_\Gamma$ is Hermitian by construction. Moreover, for
any Hermitian matrix $S$ and any $\xi\in T_\Gamma\St(p,n)$,
\[
\langle \Gamma S,\xi\rangle_F
=
\operatorname{Re}\operatorname{tr}((\Gamma S)^\dagger\xi)
=
\operatorname{Re}\operatorname{tr}(S\Gamma^\dagger\xi).
\]
The tangent-space relation implies that $\Gamma^\dagger\xi$ is
skew-Hermitian because
\[
(\Gamma^\dagger\xi)^\dagger
=
\xi^\dagger\Gamma
=
-\Gamma^\dagger\xi .
\]
For any Hermitian matrix $H=H^\dagger$ and skew-Hermitian matrix
$K^\dagger=-K$, the real Frobenius inner product is zero:
\[
\langle H,K\rangle_F
=
\operatorname{Re}\operatorname{tr}(H^\dagger K)
=
\operatorname{Re}\operatorname{tr}(HK).
\]
Since $(\operatorname{tr}(HK))^\ast
=\operatorname{tr}((HK)^\dagger)
=\operatorname{tr}(K^\dagger H)
=-\operatorname{tr}(KH)
=-\operatorname{tr}(HK)$, the scalar $\operatorname{tr}(HK)$ is purely
imaginary, and its real part is zero. Thus
$\langle\Gamma S,\xi\rangle_F=0$ for every tangent vector $\xi$. In
particular, $\Gamma S_\Gamma$ is normal to $T_\Gamma\St(p,n)$.

It remains to check that the remaining term is tangent. Let
$\zeta_\Gamma=\nabla F(\Gamma)-\Gamma S_\Gamma$. Using
$\Gamma^\dagger\Gamma=\mathbb{I}_n$ and $S_\Gamma=S_\Gamma^\dagger$, we have
\[
\Gamma^\dagger\zeta_\Gamma+\zeta_\Gamma^\dagger\Gamma
=
\Gamma^\dagger\nabla F(\Gamma)
+\nabla F(\Gamma)^\dagger\Gamma
-2S_\Gamma
=
0,
\]
where the last equality follows from the definition of $S_\Gamma$. Hence
$\zeta_\Gamma\in T_\Gamma\St(p,n)$. Therefore
\[
\nabla F(\Gamma)
=
\zeta_\Gamma+\Gamma S_\Gamma
\]
is an orthogonal decomposition into a tangent component and a normal
component. By the definition of the Riemannian gradient as the tangent
projection of the Euclidean gradient,
\[
\operatorname{grad}F(\Gamma)
=
\Pi_\Gamma(\nabla F(\Gamma))
=
\nabla F(\Gamma)-\Gamma S_\Gamma .
\]
Consequently,
\[
\nabla F(\Gamma)
=
\operatorname{grad}F(\Gamma)+\Gamma S_\Gamma .
\]
Substitution into the Euclidean descent inequality yields
\[
F(\Gamma')
\leq
F(\Gamma)
+
\langle\operatorname{grad}F(\Gamma),\Delta\rangle_F
	+
	\langle\Gamma S_\Gamma,\Delta\rangle_F
	+
	\frac{L}{2}\|\Delta\|_\infty^2 .
	\]

Now, we are going to express the normal-gradient contribution as a second-order
term. Since both endpoints are feasible and $\Gamma'=\Gamma+\Delta$,
\[
\mathbb{I}_n
=
\Gamma'^\dagger\Gamma'
=
(\Gamma+\Delta)^\dagger(\Gamma+\Delta)
=
\mathbb{I}_n
+
\Gamma^\dagger\Delta+\Delta^\dagger\Gamma
+
\Delta^\dagger\Delta,
\]
so
\[
\Gamma^\dagger\Delta+\Delta^\dagger\Gamma
=
-\Delta^\dagger\Delta .
\]
Decompose $\Gamma^\dagger\Delta$ into Hermitian and skew-Hermitian
parts:
\[
\Gamma^\dagger\Delta
=
\frac{\Gamma^\dagger\Delta+\Delta^\dagger\Gamma}{2}
+
\frac{\Gamma^\dagger\Delta-\Delta^\dagger\Gamma}{2}.
\]
The second term is skew-Hermitian. Since $S_\Gamma$ is Hermitian, it is
orthogonal to this skew-Hermitian part under
$\langle A,B\rangle_F=\operatorname{Re}\operatorname{tr}(A^\dagger B)$.
Therefore, only the Hermitian part of $\Gamma^\dagger\Delta$ contributes to
the inner product with $S_\Gamma$. Using the feasibility identity above,
\[
\frac{\Gamma^\dagger\Delta+\Delta^\dagger\Gamma}{2}
=
-\frac{1}{2}\Delta^\dagger\Delta .
\]
Also,
\[
\langle\Gamma S_\Gamma,\Delta\rangle_F
=
\langle S_\Gamma,\Gamma^\dagger\Delta\rangle_F
=
\left\langle
S_\Gamma,
\frac{\Gamma^\dagger\Delta+\Delta^\dagger\Gamma}{2}
\right\rangle_F
=
-\frac{1}{2}
\langle S_\Gamma,\Delta^\dagger\Delta\rangle_F .
\]
Substituting this identity into the descent inequality proves the first
bound. The correction is quadratic because it depends on
$\Delta^\dagger\Delta$, even though $\Delta=\Gamma'-\Gamma$ is generally not
a tangent vector at $\Gamma$.

	For the second bound, use spectral--nuclear duality:
	\[
	-\frac{1}{2}\langle S_\Gamma,\Delta^\dagger\Delta\rangle_F
	\leq
	\frac{1}{2}
	\left|
	\langle S_\Gamma,\Delta^\dagger\Delta\rangle_F
	\right|
	\leq
	\frac{1}{2}\|S_\Gamma\|_1\|\Delta^\dagger\Delta\|_\infty .
	\]
	Furthermore, by submultiplicativity,
	\[
	\|\Delta^\dagger\Delta\|_\infty
	\leq
	\|\Delta^\dagger\|_\infty\|\Delta\|_\infty
	=
	\|\Delta\|_\infty^2 .
	\]
	Combining these two estimates gives the second bound.
	
	Finally, we control $\|S_\Gamma\|_1$. Since
	$S_\Gamma=(\Gamma^\dagger\nabla F(\Gamma)+\nabla F(\Gamma)^\dagger\Gamma)/2$,
	the triangle inequality gives
	\[
	\|S_\Gamma\|_1
	\leq
	\frac{1}{2}
	\|\Gamma^\dagger\nabla F(\Gamma)\|_1
	+
	\frac{1}{2}
	\|\nabla F(\Gamma)^\dagger\Gamma\|_1 .
	\]
	The two terms are equal, and because $\Gamma^\dagger\Gamma=\mathbb{I}_n$,
	all singular values of $\Gamma$ are equal to one; hence
	$\|\Gamma^\dagger\|_\infty=1$. Therefore,
	\[
	\|S_\Gamma\|_1
	\leq
	\|\Gamma^\dagger\nabla F(\Gamma)\|_1
	\leq
	\|\nabla F(\Gamma)\|_1
	\leq
	G_1.
	\]
Substituting this estimate into the second bound gives the final statement.
\end{proof}
%
%\begin{remark}
%	\label{rem:stiefel-smooth-role}
%	Lemma \ref{thm:operator-nuclear-descent} provides the deterministic smoothness
%	estimate used later in the one-step descent argument for NS-RIS. It explains
%	why replacing the Euclidean gradient by its tangent projection introduces only
%	a quadratic correction when both endpoints lie on the Stiefel manifold. The
%	bounded-gradient constant $G$ is finite whenever the Euclidean gradient is
%	continuous on $\St(p,n)$, since $\St(p,n)$ is compact.
%\end{remark}

The following Lemma \ref{lem:riemannian-gradient-lipschitz} shows that the Euclidean Lipschitz smoothness assumption
implies a corresponding Lipschitz bound for the Riemannian gradient after
projection onto the Stiefel tangent space.
\begin{lemma}
\label{lem:riemannian-gradient-lipschitz}
Suppose Assumption~\ref{assum:retraction-smoothness} holds, and let
$G_1\geq\sup_{\Gamma\in\St(p,n)}\|\nabla F(\Gamma)\|_1$ as in Lemma \ref{thm:operator-nuclear-descent}. Then, for all
$\Gamma,\Gamma'\in\St(p,n)$,
\[
    \|\operatorname{grad}F(\Gamma)-\operatorname{grad}F(\Gamma')\|_1
    \leq
    L_R\|\Gamma-\Gamma'\|_\infty,
    \qquad
    L_R:=2(L+G_1).
\]
\end{lemma}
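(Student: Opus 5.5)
We use the decomposition from Lemma~\ref{thm:operator-nuclear-descent}, which writes the Riemannian gradient as $\operatorname{grad}F(\Gamma)=\nabla F(\Gamma)-\Gamma S_\Gamma$ with
\[
S_\Gamma=\tfrac12\bigl(\Gamma^\dagger\nabla F(\Gamma)+\nabla F(\Gamma)^\dagger\Gamma\bigr).
\]
Write $\Delta=\Gamma-\Gamma'$. Then
\[
\operatorname{grad}F(\Gamma)-\operatorname{grad}F(\Gamma')
=
\bigl(\nabla F(\Gamma)-\nabla F(\Gamma')\bigr)
-
\bigl(\Gamma S_\Gamma-\Gamma' S_{\Gamma'}\bigr).
\]
The first difference is bounded directly by Assumption~\ref{assum:retraction-smoothness}: its nuclear norm is at most $L\|\Delta\|_\infty$. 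The remaining work is to show that the second difference has nuclear norm at most $(L+2G_1)\|\Delta\|_\infty$, or some similar bound that fits inside $2(L+G_1)$.

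For the normal-part difference we add and subtract $\Gamma S_{\Gamma'}$:
\[
\Gamma S_\Gamma-\Gamma' S_{\Gamma'}
=
\Gamma\,(S_\Gamma-S_{\Gamma'})+\Delta\, S_{\Gamma'}.
\]
We then use the Hölder-type inequality $\|AB\|_1\le\|A\|_\infty\|B\|_1$ on each term.
\begin{itemize}
\item Since $\|\Gamma\|_\infty=1$ on the Stiefel manifold, the first term is bounded by $\|S_\Gamma-S_{\Gamma'}\|_1$.
\item The second term is bounded by $\|\Delta\|_\infty\|S_{\Gamma'}\|_1\le G_1\|\Delta\|_\infty$, using the bound $\|S_{\Gamma'}\|_1\le G_1$ already established in the proof of Lemma~\ref{thm:operator-nuclear-descent}.
\end{itemize}

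It remains to bound $\|S_\Gamma-S_{\Gamma'}\|_1$. Because taking the adjoint preserves every Schatten norm, it suffices to bound $\|\Gamma^\dagger\nabla F(\Gamma)-\Gamma'^\dagger\nabla F(\Gamma')\|_1$. Splitting this as
\[
\Gamma^\dagger\bigl(\nabla F(\Gamma)-\nabla F(\Gamma')\bigr)+\Delta^\dagger\nabla F(\Gamma')
\]
and applying the same Hölder inequality gives the bound $L\|\Delta\|_\infty+G_1\|\Delta\|_\infty$. Combining everything yields
\[
\|\operatorname{grad}F(\Gamma)-\operatorname{grad}F(\Gamma')\|_1
\le
L\|\Delta\|_\infty+(L+G_1)\|\Delta\|_\infty+G_1\|\Delta\|_\infty
=
2(L+G_1)\|\Delta\|_\infty
=
L_R\|\Delta\|_\infty .
\]

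The only real care point is the mixed Hölder inequality $\|AB\|_1\le\|A\|_\infty\|B\|_1$ and its variant $\|AB\|_1\le\|A\|_1\|B\|_\infty$. These follow from the duality $\|X\|_1=\sup_{\|Y\|_\infty\le1}\langle X,Y\rangle_F$ together with submultiplicativity of the operator norm, and they also hold for the rectangular matrices involved here. One also has to check that Assumption~\ref{assum:retraction-smoothness} is only ever applied at pairs of points on $\St(p,n)$. This holds here, since both $\Gamma$ and $\Gamma'$ are feasible.
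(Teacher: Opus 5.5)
Your proof is correct and follows essentially the same route as the paper. You write $\operatorname{grad}F=\nabla F-\Gamma S_\Gamma$ and split the normal-part difference into a $(\Gamma-\Gamma')$-times-$S$ term and a $\Gamma$-times-$(S_\Gamma-S_{\Gamma'})$ term. You then bound $\|S_\Gamma-S_{\Gamma'}\|_1\le\|\Gamma^\dagger\nabla F(\Gamma)-\Gamma'^\dagger\nabla F(\Gamma')\|_1\le(L+G_1)\|\Gamma-\Gamma'\|_\infty$ via the mixed H\"older inequality. The only difference is that you add and subtract $\Gamma S_{\Gamma'}$ where the paper uses $\Gamma' S_\Gamma$, which leaves the constant $L_R=2(L+G_1)$ unchanged.
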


\begin{proof}
Let $A=\nabla F(\Gamma)$, $A'=\nabla F(\Gamma')$, and
$d=\|\Gamma-\Gamma'\|_\infty$. By
Assumption~\ref{assum:retraction-smoothness},
\[
    \|A-A'\|_1\leq Ld .
\]
Using
$\operatorname{grad}F(\Gamma)=\nabla F(\Gamma)-\Gamma S_\Gamma$, we have
\[
\begin{aligned}
    \|\operatorname{grad}F(\Gamma)-\operatorname{grad}F(\Gamma')\|_1
    &\leq
    \|A-A'\|_1
    +
    \|\Gamma S_\Gamma-\Gamma'S_{\Gamma'}\|_1  \\
    &\leq
    Ld
    +
    \|(\Gamma-\Gamma')S_\Gamma\|_1
    +
    \|\Gamma'(S_\Gamma-S_{\Gamma'})\|_1 .
\end{aligned}
\]
Since $\Gamma,\Gamma'\in\St(p,n)$, $\|\Gamma'\|_\infty=1$. Also
$\|S_\Gamma\|_1\leq\|\nabla F(\Gamma)\|_1\leq G_1$, as shown in
Lemma~\ref{thm:operator-nuclear-descent}. Therefore, by
submultiplicativity of the nuclear norm with respect to the operator norm,
\[
    \|(\Gamma-\Gamma')S_\Gamma\|_1
    \leq
    \|\Gamma-\Gamma'\|_\infty\|S_\Gamma\|_1
    \leq
    G_1d,
\]
and
\[
    \|\Gamma'(S_\Gamma-S_{\Gamma'})\|_1
    \leq
    \|\Gamma'\|_\infty\|S_\Gamma-S_{\Gamma'}\|_1
    =
    \|S_\Gamma-S_{\Gamma'}\|_1 .
\]
Combining these two estimates with $\|A-A'\|_1\leq Ld$ gives
\[
    \|\operatorname{grad}F(\Gamma)-\operatorname{grad}F(\Gamma')\|_1
    \leq
    (L+G_1)d+\|S_\Gamma-S_{\Gamma'}\|_1 .
\]
It remains to bound the last term. From the definition of $S_\Gamma$,
with
\[
    B=\Gamma^\dagger A-\Gamma'^\dagger A',
\]
we have
\[
    S_\Gamma-S_{\Gamma'}
    =
    \frac{1}{2}
    \left(B+B^\dagger\right),
\]
because
\[
    B^\dagger
    =
    A^\dagger\Gamma-A'^\dagger\Gamma' .
\]
Since the nuclear norm is invariant under adjoints, $\|B^\dagger\|_1=\|B\|_1$.
Thus
\[
    \|S_\Gamma-S_{\Gamma'}\|_1
    \leq
    \frac{1}{2}\|B\|_1+\frac{1}{2}\|B^\dagger\|_1
    =
    \|B\|_1 .
\]
That is,
\[
\begin{aligned}
    \|S_\Gamma-S_{\Gamma'}\|_1
    &\leq
    \|\Gamma^\dagger A-\Gamma'^\dagger A'\|_1  \\
    &\leq
    \|\Gamma^\dagger(A-A')\|_1
    +
    \|(\Gamma-\Gamma')^\dagger A'\|_1  \\
    &\leq
    Ld+G_1d .
\end{aligned}
\]
Combining the two estimates gives
\[
    \|\operatorname{grad}F(\Gamma)-\operatorname{grad}F(\Gamma')\|_1
    \leq
    2(L+G_1)d
    =
    L_R\|\Gamma-\Gamma'\|_\infty .
\]
\end{proof}

The following Lemma \ref{lem:nsris-descent} gives the one-step descent estimate for the actual NS-RIS
update, including both the finite Newton--Schulz direction error and the
second Newton--Schulz feasibility correction.
\begin{lemma}
	\label{lem:nsris-descent}
Consider the actual NS-RIS update
	\[
	    \Gamma_{k+\frac{1}{2}}
	    =
	    \Gamma_k-\eta\widetilde{M}_{k+1},
	    \qquad
	    \Gamma_{k+1}
	    =
	    \RetrNS(\Gamma_{k+\frac{1}{2}},T_{\mathrm{NS}}),
	\]
	and suppose that $\Gamma_{k+1}\in\St(p,n)$. Define the second
	Newton--Schulz correction
	\[
	    E_{k+1}
	    =
	    \Gamma_{k+1}-\Gamma_{k+\frac{1}{2}},
	    \qquad
	    \rho_{k+1}=\|E_{k+1}\|_\infty .
	\]
	Then, under Assumptions~\ref{assum:retraction-smoothness} and~\ref{assum:finite-ns},
	\[
	\begin{aligned}
		F(\Gamma_{k+1})
		\leq\;&
		F(\Gamma_k)
		-
		\eta(1-\varepsilon_{\mathrm{NS}})\|M_{k+1}\|_1
		\\
		&+
		\eta(1+\varepsilon_{\mathrm{NS}})
		\|\operatorname{grad}F(\Gamma_k)-M_{k+1}\|_1
		+
		\rho_{k+1}\|\operatorname{grad}F(\Gamma_k)\|_1
		\\
		&+
		\frac{L_R}{2}
		\left(
		    \eta\|\widetilde{M}_{k+1}\|_\infty
		    +
		    \rho_{k+1}
		\right)^2 .
	\end{aligned}
	\]
\end{lemma}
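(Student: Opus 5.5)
The plan is to apply the deterministic descent inequality of Lemma~\ref{thm:operator-nuclear-descent} to the pair $\Gamma=\Gamma_k$, $\Gamma'=\Gamma_{k+1}$, both of which lie on $\St(p,n)$. For $\Gamma_{k+1}$ this is the hypothesis; for $\Gamma_k$ I assume it holds by the same hypothesis at the previous iteration, or at initialization. By construction the displacement splits as
\[
\Delta=\Gamma_{k+1}-\Gamma_k=-\eta\widetilde{M}_{k+1}+E_{k+1}.
\]
Lemma~\ref{thm:operator-nuclear-descent} then gives
\[
F(\Gamma_{k+1})\le F(\Gamma_k)+\langle\operatorname{grad}F(\Gamma_k),\Delta\rangle_F+\tfrac{L+G_1}{2}\|\Delta\|_\infty^2 .
\]
Since $L+G_1=L_R/2\le L_R$, the quadratic coefficient can be replaced by $L_R/2$. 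This is a weaker statement than the lemma gives, but it matches the form claimed.

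Next I would bound the linear term by splitting it in two pieces.

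The piece coming from $E_{k+1}$ is handled by spectral--nuclear duality:
\[
\langle\operatorname{grad}F(\Gamma_k),E_{k+1}\rangle_F\le\rho_{k+1}\|\operatorname{grad}F(\Gamma_k)\|_1 .
\]

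For the direction piece, I add and subtract the momentum:
\[
-\eta\langle\operatorname{grad}F(\Gamma_k),\widetilde{M}_{k+1}\rangle_F=-\eta\langle M_{k+1},\widetilde{M}_{k+1}\rangle_F-\eta\langle\operatorname{grad}F(\Gamma_k)-M_{k+1},\widetilde{M}_{k+1}\rangle_F .
\]
The second term is at most $\eta\|\operatorname{grad}F(\Gamma_k)-M_{k+1}\|_1\|\widetilde{M}_{k+1}\|_\infty$ by duality. Assumption~\ref{assum:finite-ns} gives $\|\widetilde{M}_{k+1}\|_\infty\le\|P_{k+1}\|_\infty+\varepsilon_{\mathrm{NS}}\le1+\varepsilon_{\mathrm{NS}}$, since a polar factor has operator norm at most one. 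For the first term, I write $\widetilde{M}_{k+1}=P_{k+1}+(\widetilde{M}_{k+1}-P_{k+1})$. The polar identity $\langle M_{k+1},P_{k+1}\rangle_F=\|M_{k+1}\|_1$ and duality together give
\[
\langle M_{k+1},\widetilde{M}_{k+1}\rangle_F\ge\|M_{k+1}\|_1-\varepsilon_{\mathrm{NS}}\|M_{k+1}\|_1 .
\]
This produces the term $-\eta(1-\varepsilon_{\mathrm{NS}})\|M_{k+1}\|_1$.

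Finally, the triangle inequality gives $\|\Delta\|_\infty\le\eta\|\widetilde{M}_{k+1}\|_\infty+\rho_{k+1}$, and substituting yields the quadratic term. Collecting the pieces gives the stated inequality.

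None of these steps is deep. The only points needing care are the following. The first is the feasibility of $\Gamma_k$, which Lemma~\ref{thm:operator-nuclear-descent} requires at both endpoints. The second is the convention that $\Polar(M_{k+1})$ has operator norm at most one and satisfies $\langle M,\Polar(M)\rangle_F=\|M\|_1$ even when $M_{k+1}$ is rank-deficient or zero; the thin-SVD definition ensures both.
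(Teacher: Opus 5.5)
Your proposal is correct and follows the paper's proof essentially step for step. Both arguments apply Lemma~\ref{thm:operator-nuclear-descent} with the quadratic constant relaxed to $L_R/2$, split $\Delta_k=-\eta\widetilde{M}_{k+1}+E_{k+1}$, bound the $E_{k+1}$ term by spectral--nuclear duality, and decompose $\langle \operatorname{grad}F(\Gamma_k),\widetilde{M}_{k+1}\rangle_F$ through $M_{k+1}$ and $P_{k+1}$ using Assumption~\ref{assum:finite-ns}; you also make explicit the requirement $\Gamma_k\in\St(p,n)$, which the paper leaves implicit.
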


\begin{proof}
	Let
	\[
	    H_k=\operatorname{grad}F(\Gamma_k)
	    \quad\text{and}\quad
	    \Delta_k=\Gamma_{k+1}-\Gamma_k .
	\]
	The actual update gives the exact decomposition
	\[
	    \Delta_k
	    =
	    -\eta\widetilde{M}_{k+1}+E_{k+1}.
	\]
	By Lemma~\ref{thm:operator-nuclear-descent}, with the weaker quadratic
	constant $L_R\geq L+G_1$,
	\[
	F(\Gamma_{k+1})
	\leq
	F(\Gamma_k)
	-
	\eta\langle H_k,\widetilde{M}_{k+1}\rangle_F
	+
	\langle H_k,E_{k+1}\rangle_F
	+
	\frac{L_R}{2}\|\Delta_k\|_\infty^2 .
	\]
	Spectral--nuclear duality and the definition of $\rho_{k+1}$ give
	\[
	    \langle H_k,E_{k+1}\rangle_F
	    \leq
	    \|H_k\|_1\|E_{k+1}\|_\infty
	    =
	    \rho_{k+1}\|H_k\|_1,
	\]
	and the triangle inequality gives
	\[
	    \|\Delta_k\|_\infty
	    \leq
	    \eta\|\widetilde{M}_{k+1}\|_\infty+\rho_{k+1}.
	\]
	The inner product is decomposed into a momentum term and a tracking error:
	\[
	\begin{aligned}
		\langle H_k,\widetilde{M}_{k+1}\rangle_F
		&=
		\langle M_{k+1},\widetilde{M}_{k+1}\rangle_F
		+
		\langle H_k-M_{k+1},\widetilde{M}_{k+1}\rangle_F .
	\end{aligned}
	\]
	Let $P_{k+1}=\Polar(M_{k+1})$. Since
	$\langle M_{k+1},P_{k+1}\rangle_F=\|M_{k+1}\|_1$,
	\[
	\langle M_{k+1},\widetilde{M}_{k+1}\rangle_F
	=
	\|M_{k+1}\|_1
	+
	\langle M_{k+1},\widetilde{M}_{k+1}-P_{k+1}\rangle_F
	\geq
	(1-\varepsilon_{\mathrm{NS}})\|M_{k+1}\|_1,
	\]
	where the last inequality uses spectral--nuclear duality and
	Assumption~\ref{assum:finite-ns}. The same assumption also implies
	\[
	    \|\widetilde{M}_{k+1}\|_\infty
	    \leq
	    \|P_{k+1}\|_\infty
	    +
	    \|\widetilde{M}_{k+1}-P_{k+1}\|_\infty
	    \leq
	    1+\varepsilon_{\mathrm{NS}} .
	\]
	Thus the remaining term is bounded by
	\[
	\left|
	    \langle H_k-M_{k+1},\widetilde{M}_{k+1}\rangle_F
	\right|
	\leq
	(1+\varepsilon_{\mathrm{NS}})
	\|H_k-M_{k+1}\|_1 .
	\]
	Combining the inequalities proves the claim.
\end{proof}

The following Lemma \ref{lem:momentum-tracking} controls how closely the momentum variable tracks the
Riemannian gradient along the stochastic NS-RIS iterates.
\begin{lemma}
	\label{lem:momentum-tracking}
	For a constant stepsize $\eta$ and a fixed momentum
	parameter $0\leq\beta<1$, let
	\[
	    H_k=\operatorname{grad}F(\Gamma_k),
	    \qquad
	    e_{k+1}=M_{k+1}-H_k .
	\]
	For $k\geq1$, define the second Newton--Schulz correction
	\[
	    E_k
	    =
	    \Gamma_k-\bigl(\Gamma_{k-1}-\eta\widetilde{M}_{k}\bigr),
	    \qquad
	    \rho_k=\|E_k\|_\infty .
	\]
	Under  Assumptions~\ref{assum:retraction-smoothness}
	and~\ref{assum:stochastic-gradients}, the exponential moving average
	$M_{k+1}=\beta M_k+(1-\beta)g_k$ satisfies
	\[
	\begin{aligned}
	\frac{1}{K}\sum_{k=0}^{K-1}
	\mathbb{E}
	\|H_k-M_{k+1}\|_1
	\leq\;&
	\frac{\mathbb{E}\|e_1\|_1}{(1-\beta)K}
	+
	\frac{\sqrt{n}\,\sigma}{\sqrt{b}}
+
	\frac{\beta L_R}{1-\beta}
	\frac{1}{K}\sum_{k=1}^{K-1}
	\mathbb{E}\left[
	    \eta\|\widetilde{M}_{k}\|_\infty+\rho_k
	\right].
	\end{aligned}
	\]
\end{lemma}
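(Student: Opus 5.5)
We unroll the error recursion for $e_{k+1}=M_{k+1}-H_k$ and bound each resulting term in nuclear norm. From $M_{k+1}=\beta M_k+(1-\beta)g_k$ we get, for $k\ge1$,
\[
    e_{k+1}
    =
    \beta e_k
    +
    \beta\,(H_{k-1}-H_k)
    +
    (1-\beta)\,(g_k-H_k),
\]
since $\beta M_k=\beta e_k+\beta H_{k-1}$ and $(1-\beta)g_k-H_k=(1-\beta)(g_k-H_k)-\beta H_k$. Iterating down to $e_1$ gives
\[
    e_{k+1}
    =
    \beta^{k}e_1
    +
    \beta\sum_{j=1}^{k}\beta^{k-j}(H_{j-1}-H_j)
    +
    (1-\beta)\sum_{j=1}^{k}\beta^{k-j}(g_j-H_j).
\]
We will take nuclear norms, use the triangle inequality, take expectations, and average over $k=0,\dots,K-1$.

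\textbf{Initialization term.} Its average is at most $\frac{1}{K}\sum_k\beta^k\,\mathbb{E}\|e_1\|_1\le\frac{\mathbb{E}\|e_1\|_1}{(1-\beta)K}$.

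\textbf{Drift term.} Lemma~\ref{lem:riemannian-gradient-lipschitz} gives $\|H_{j-1}-H_j\|_1\le L_R\|\Gamma_j-\Gamma_{j-1}\|_\infty$. The update decomposes as $\Gamma_j-\Gamma_{j-1}=-\eta\widetilde M_j+E_j$, so this is at most $L_R(\eta\|\widetilde M_j\|_\infty+\rho_j)$. Swapping the order of summation, each index $j\in\{1,\dots,K-1\}$ collects total weight $\beta\sum_{k\ge j}\beta^{k-j}\le\beta/(1-\beta)$. This yields exactly the third term, with the sum running over $j=1,\dots,K-1$. All steps here are deterministic, so no conditioning is needed.

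\textbf{Noise term.} Here we switch norms via $\|A\|_1\le\sqrt{n}\,\|A\|_F$, valid since the rank is at most $n$. We then bound
\[
    \mathbb{E}\Big\|\sum_j\beta^{k-j}(g_j-H_j)\Big\|_F
\]
through Jensen as the square root of the second moment. By Assumption~\ref{assum:stochastic-gradients}, $\mathbb{E}[g_j-H_j\mid\mathcal F_j]=0$, so the cross terms vanish by the tower property: the $\Gamma_j$ are $\mathcal F_j$-measurable and later indices condition on later filtrations. Each diagonal term contributes at most $\beta^{2(k-j)}\sigma^2/b$. Hence the second moment is at most $\frac{\sigma^2}{b(1-\beta^2)}$. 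Multiplying by $(1-\beta)$ gives
\[
    \sqrt{n}\,\frac{\sigma}{\sqrt b}\,\frac{1-\beta}{\sqrt{1-\beta^2}}
    =
    \sqrt{n}\,\frac{\sigma}{\sqrt b}\sqrt{\frac{1-\beta}{1+\beta}}
    \le
    \frac{\sqrt n\,\sigma}{\sqrt b}.
\]

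\textbf{The $k=0$ term and the main obstacle.} The $k=0$ term is $\|e_1\|_1$ itself, which the first bound already covers. The main obstacle is the noise term, specifically justifying the martingale-difference structure. The mini-batch $\mathcal B_k$ is drawn from a random permutation within an epoch, not i.i.d. We therefore rely on Assumption~\ref{assum:stochastic-gradients} as stated, namely conditional unbiasedness given $\mathcal F_k$ with variance $\sigma^2/b$. We also need to ensure $\mathcal F_k$ contains $\Gamma_k$ and all earlier $g_j$, so that the orthogonality of the increments is legitimate.
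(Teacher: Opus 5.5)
Your proof is correct and yields the stated bound. It differs from the paper's argument mainly in how the noise is handled.

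The paper never unrolls the recursion. It takes nuclear norms directly in the one-step identity $e_{k+1}=\beta e_k+\beta(H_{k-1}-H_k)+(1-\beta)\zeta_k$ and bounds $\mathbb{E}\|\zeta_k\|_1\le\sqrt n\,\sigma/\sqrt b$ term by term via Jensen. It then sums the scalar recursion $a_{k+1}\le\beta a_k+\beta\,\mathbb{E}\|H_k-H_{k-1}\|_1+(1-\beta)s$ over $k=1,\dots,K-1$ and rearranges, dropping $-\beta a_K$. The initialization and drift terms come out the same as yours; your swap of summation order is the unrolled form of that geometric-weight bookkeeping.

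The real difference is the noise term. The paper applies the triangle inequality to each $\zeta_k$ separately, so its proof uses only the conditional mean-square bound $\mathbb{E}[\|g_k-H_k\|_F^2\mid\mathcal F_k]\le\sigma^2/b$. It never uses conditional unbiasedness or cancellation of cross terms. The martingale-difference structure you flagged as the main obstacle is therefore unnecessary for this lemma. The paper's version holds whether sampling is with or without replacement (e.g. the random permutation actually used in Algorithm~\ref{alg:learn-qgm4}), as long as such a mean-square bound holds.

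Your route uses the orthogonality of martingale increments. This requires $\mathbb{E}[g_j-H_j\mid\mathcal F_j]=0$ with $\mathcal F_j$ containing $\Gamma_j$ and all earlier $g_i$, which is the natural reading of ``history'' in Assumption~\ref{assum:stochastic-gradients}. In exchange you get the sharper noise floor $\frac{\sqrt n\,\sigma}{\sqrt b}\sqrt{\frac{1-\beta}{1+\beta}}$. This captures the genuine variance reduction from momentum averaging, which the paper's termwise bound throws away.
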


\begin{proof}
	Write
	\[
	    g_k=H_k+\zeta_k,
	    \qquad \text{where}\quad
	    \mathbb{E}[\zeta_k\mid\mathcal{F}_k]=0 .
	\]
	By Assumption~\ref{assum:stochastic-gradients},
	\[
	    \mathbb{E}\!\left[
	    \|\zeta_k\|_F^2\mid\mathcal{F}_k
	    \right]
	    \leq
	    \frac{\sigma^2}{b}.
	\]
	Since $\operatorname{rank}(\zeta_k)\leq n$,
	\[
	    \mathbb{E}\|\zeta_k\|_1
	    \leq
	    \sqrt{n}\,\mathbb{E}\|\zeta_k\|_F
	    \leq
	    \frac{\sqrt{n}\,\sigma}{\sqrt{b}} .
	\]
	For $k\geq1$, the momentum recursion gives
	\[
	\begin{aligned}
	    e_{k+1}
	    &=
	    M_{k+1}-H_k  \\
	    &=
	    \beta M_k+(1-\beta)g_k-H_k  \\
	    &=
	    \beta(M_k-H_{k-1})
	    +
	    \beta(H_{k-1}-H_k)
	    +
	    (1-\beta)\zeta_k  \\
	    &=
	    \beta e_k
	    +
	    \beta(H_{k-1}-H_k)
	    +
	    (1-\beta)\zeta_k .
	\end{aligned}
	\]
	Hence,
	\[
	    \mathbb{E}\|e_{k+1}\|_1
	    \leq
	    \beta\mathbb{E}\|e_k\|_1
	    +
	    \beta\mathbb{E}\|H_k-H_{k-1}\|_1
	    +
	    (1-\beta)\frac{\sqrt{n}\,\sigma}{\sqrt{b}} .
	\]
	By Lemma~\ref{lem:riemannian-gradient-lipschitz} and the actual update,
	\[
	    \Gamma_k-\Gamma_{k-1}
	    =
	    -\eta\widetilde{M}_k+E_k,
	\]
we have
	\[
	    \|H_k-H_{k-1}\|_1
	    \leq
	    L_R\|\Gamma_k-\Gamma_{k-1}\|_\infty
	    \leq
	    L_R\left(\eta\|\widetilde{M}_k\|_\infty+\rho_k\right).
	\]
	Let $a_k=\mathbb{E}\|e_k\|_1$ and
	$s=\sqrt{n}\sigma/\sqrt{b}$. Summing the preceding recursion from
	$k=1$ to $K-1$ gives
	\[
	    \sum_{k=1}^{K-1}a_{k+1}
	    \leq
	    \beta\sum_{k=1}^{K-1}a_k
	    +
	    \beta L_R
	    \sum_{k=1}^{K-1}
	    \mathbb{E}\left[
	        \eta\|\widetilde{M}_k\|_\infty+\rho_k
	    \right]
	    +
	    (K-1)(1-\beta)s .
	\]
	Since
	$\sum_{k=1}^{K-1}a_{k+1}=\sum_{k=2}^{K}a_k$ and
	$\sum_{k=1}^{K-1}a_k=a_1+\sum_{k=2}^{K}a_k-a_K$, this implies
	\[
	    (1-\beta)\sum_{k=2}^{K}a_k
	    \leq
	    \beta a_1
	    +
	    \beta L_R
	    \sum_{k=1}^{K-1}
	    \mathbb{E}\left[
	        \eta\|\widetilde{M}_k\|_\infty+\rho_k
	    \right]
	    +
	    (K-1)(1-\beta)s,
	\]
	where the nonpositive term $-\beta a_K$ has been dropped. Adding $a_1$
	to both sides after division by $1-\beta$ gives
	\[
	    \sum_{k=1}^{K}a_k
	    \leq
	    \frac{a_1}{1-\beta}
	    +
	    (K-1)s
	    +
	    \frac{\beta L_R}{1-\beta}
	    \sum_{k=1}^{K-1}
	    \mathbb{E}\left[
	        \eta\|\widetilde{M}_k\|_\infty+\rho_k
	    \right].
	\]
After the final
	division by $K$, the noise contribution is
	$((K-1)/K)s\leq s$.
	Dividing by $K$ and recalling that
	$\|H_k-M_{k+1}\|_1=\|e_{k+1}\|_1$ proves the claim.
\end{proof}

\begin{proof}[Proof of Theorem~\ref{thm:nsris-convergence}]
	For compactness, write
	\[
	H_k=\operatorname{grad}F(\Gamma_k)\quad \text{and}\quad
	T_k=\|H_k-M_{k+1}\|_1.
	\]
	Lemma~\ref{lem:nsris-descent} gives the one-step inequality
	\[
	\begin{aligned}
		F(\Gamma_{k+1})
		\leq\;&
		F(\Gamma_k)
		-
		\eta (1-\varepsilon_{\mathrm{NS}})\|M_{k+1}\|_1
		+
		\eta (1+\varepsilon_{\mathrm{NS}})T_k
		+
		\rho_{k+1}\|H_k\|_1
		+
		\frac{L_R}{2}
		\left(
		    \eta\|\widetilde{M}_{k+1}\|_\infty+\rho_{k+1}
		\right)^2 .
	\end{aligned}
	\]
	By Assumption~\ref{assum:finite-ns},
	$\|\widetilde{M}_{k+1}\|_\infty\leq (1+\varepsilon_{\mathrm{NS}})$. Also,
	$\|H_k\|_1\leq\|\nabla F(\Gamma_k)\|_1+\|S_{\Gamma_k}\|_1\leq2G_1$.
	Using $(x+y)^2\leq2x^2+2y^2$ gives
	\[
	\eta (1-\varepsilon_{\mathrm{NS}})\|M_{k+1}\|_1
	\leq
	F(\Gamma_k)-F(\Gamma_{k+1})
	+
	\eta (1+\varepsilon_{\mathrm{NS}})T_k
	+
	2G_1\rho_{k+1}
	+
	L_R\eta^2(1+\varepsilon_{\mathrm{NS}})^2
	+
	L_R\rho_{k+1}^2 .
	\]
	Taking expectations, summing from $k=0$ to $K-1$, and using
	$F(\Gamma_K)\geq F^*$ gives
	\[
	\frac{1}{K}\sum_{k=0}^{K-1}
	\mathbb{E}\|M_{k+1}\|_1
	\leq
	\frac{D}{\eta (1-\varepsilon_{\mathrm{NS}})K}
	+
	\frac{1+\varepsilon_{\mathrm{NS}}}{1-\varepsilon_{\mathrm{NS}}}
	\frac{1}{K}\sum_{k=0}^{K-1}\mathbb{E}T_k
	+
	\frac{2G_1\bar\rho_K+L_R\bar q_K}
	{\eta (1-\varepsilon_{\mathrm{NS}})}
	+
	\frac{L_R(1+\varepsilon_{\mathrm{NS}})^2\eta}{1-\varepsilon_{\mathrm{NS}}}.
	\]
	The desired stationarity measure is related to the momentum by the triangle
	inequality:
	\[
	\|H_k\|_1
	\leq
	\|M_{k+1}\|_1+T_k .
	\]
	Combining the last two inequalities gives
	\[
	\begin{aligned}
		\frac{1}{K}\sum_{k=0}^{K-1}
		\mathbb{E}\|H_k\|_1
		\leq\;&
		\frac{D}{\eta (1-\varepsilon_{\mathrm{NS}})K}
		+
		\frac{L_R(1+\varepsilon_{\mathrm{NS}})^2\eta}{1-\varepsilon_{\mathrm{NS}}}
		+
		\frac{2G_1\bar\rho_K+L_R\bar q_K}
		{\eta (1-\varepsilon_{\mathrm{NS}})}
		\\
		&+
		\left(1+\frac{1+\varepsilon_{\mathrm{NS}}}{1-\varepsilon_{\mathrm{NS}}}\right)
		\frac{1}{K}\sum_{k=0}^{K-1}\mathbb{E}T_k .
	\end{aligned}
	\]
	Lemma~\ref{lem:momentum-tracking}, together with
	$\|\widetilde{M}_{k}\|_\infty\leq (1+\varepsilon_{\mathrm{NS}})$, gives
	\[
	\frac{1}{K}\sum_{k=0}^{K-1}\mathbb{E}T_k
	\leq
	\frac{\mathbb{E}\|M_1-H_0\|_1}{(1-\beta)K}
	+
	\frac{\sqrt{n}\,\sigma}{\sqrt{b}}
	+
	\frac{\beta L_R}{1-\beta}
	\left(\eta (1+\varepsilon_{\mathrm{NS}})+\bar\rho_K\right).
	\]
	Substituting this estimate into the previous display proves the theorem.
\end{proof}

\section*{Acknowledgments}
Portions of this work were conducted using the advanced computing resources provided by Texas A\&M High Performance Research Computing (HPRC).

\bibliography{ms}

\end{document}